%% file: arxiv_version.tex
\documentclass{article}

\usepackage{arxiv}
\input{math_commands.tex}

\usepackage{amsmath}
\usepackage{amssymb}
\usepackage{amsthm}
\usepackage{mathtools}
\usepackage{bm}
\usepackage{enumitem}
\usepackage{microtype}
\usepackage{url}
\usepackage{hyperref}
\usepackage{dsfont}
\usepackage{bbm}
\usepackage{wrapfig}
\usepackage[dvipsnames]{xcolor}
\usepackage{booktabs} 
\newtheorem{theorem}{Theorem}

\theoremstyle{definition}
\newtheorem{definition}{Definition}

\theoremstyle{remark}

\newcommand{\Ccal}{\mathcal{C}}
\newcommand{\Ucal}{\mathcal{U}}

\newcommand{\Risk}{\mathcal{R}}
\newcommand{\ck}{{\mathbf c}}
\newcommand{\cu}{{\mathbf u}}

\newcommand{\encod}{g}
\newcommand{\head}{f}
\newcommand{\thead}{\head}  
\definecolor{mostharm}{HTML}{FF0000}
\definecolor{medianharm}{HTML}{000000}
\definecolor{leastharm}{HTML}{7B2CBF}

\renewcommand{\E}{\mathbb{E}}
\newcommand{\MI}{\mathrm{I}}

\newcommand{\CFR}[1]{{\color{black} #1}}
\newcommand{\mypara}[1]{\textbf{#1}\hspace{0.5em}}

\title{Harmful Partial Interventions and Benign Leakage in Concept-based Models}

\author{Gregory Verghese\\
King’s College London, UK\\
PharosAI, UK \And
Christopher J. Soelistyo\\
The Francis Crick Institute, UK
\And Mateo Espinosa Zarlenga\\
University of Oxford, UK
\And Alessandro Barp\\
University College London, UK\\
The Alan Turing Institute, UK
\AND Christopher R.S. Banerji\\
King’s College London, UK\\
The Alan Turing Institute, UK\\
\texttt{christopher.banerji@kcl.ac.uk}}
\iclrfinalcopy 
\begin{document}

\maketitle

\begin{abstract}
Artificial intelligence systems deployed in high-risk settings increasingly require models that support meaningful human oversight. Concept-based models (CMs) offer a promising framework by enabling users to inspect and intervene on intermediate, human-understandable concepts. However, concept interventions can counterintuitively worsen downstream predictions, a failure mode we formalise as \textit{intervention harm}. We distinguish harm after all concepts have been intervened on from harm under partial interventions, the more likely practical use case. Modern CMs often achieve high accuracy by encoding, or \textit{leaking}, additional task-relevant information in their intermediate concept representations. Recent work argues that such leakage can be benign with respect to intervention harm when concept representations satisfy two conditions: sufficiency and localisation. We prove that these criteria are not sufficient to guarantee that concept interventions improve downstream performance. Across controlled and real-world settings, we find that concept supervision incompleteness is associated with increased intervention harm. Importantly, optimising the model when all concepts are intervened on, a previously proposed leakage mitigation objective, reduces endpoint harm but does not reliably prevent harmful partial corrections. Hence, we introduce a complementary monotonic intervention objective that directly penalises increases in task loss along partial intervention trajectories. Combining endpoint and monotonic intervention training substantially reduces the prevalence and magnitude of partial intervention harm while preserving concept and task predictive performance.
 \end{abstract}
 
\section{Introduction}
\label{sec:intro}

Artificial intelligence (AI) deployed in high-risk settings is increasingly expected to support interpretability, transparency and meaningful human oversight. Concept-based models (CMs) offer a promising framework by structuring predictions around intermediate, human-interpretable concepts \citep{koh2020concept}. Beyond making model reasoning more inspectable, concepts provide an interface through which a human overseer can intervene on the model's reasoning, thereby enabling users to correct erroneous intermediate concept predictions
\citep{shin2023closer, espinosa2023learning}. For such models to support meaningful oversight, however, intervening on a model's intermediate concept predictions should have predictable and beneficial effects on downstream task performance.\looseness=-1

A fundamental limitation of earlier CMs (e.g., Concept Bottleneck Models~\citep{koh2020concept}) is that they require a set of supervised concept labels that completely explain the downstream task. In practice, however, concept sets tend to be \emph{incomplete}, capturing only part of the information needed to predict the task and causing CMs with strict concept bottlenecks to underperform~\citep{mahinpei2021promises}. Therefore, several works have introduced architectural pathways that explicitly \textit{leak} task-relevant information that is missing from the concept set into the task predictor~\citep{espinosa2022concept, yuksekgonul2022post, margeloiu2021concept, havasi2022addressing, parisini2025leakage}. 
Although such leakage can improve model accuracy, it \textit{may} undermine the effectiveness of human intervention. When a concept representation contains leaked information, correcting the predicted concept to its ground-truth value may move the model's latent representation in a direction that does not correspond to the human's intended correction~\citep{almudevar2026there}. If this intervention worsens the downstream prediction, we refer to the resulting failure mode as \emph{intervention harm}.\looseness=-1

Recently, \emph{benign leakage} was introduced to describe leakage that allows models to retain high concept and task accuracies without displaying intervention harm~\citep{zarlenga_defense_2026}. Alongside this notion, \emph{sufficiency} and \emph{localisation} were proposed as criteria for identifying benign leakage. Informally, sufficiency requires the learned concept representations to retain task-relevant information in the input features, while localisation requires assignment of this information to the components of the concept representation that vary under intervention. These conditions motivate approaches for encouraging benign leakage during training: minimising the intervention loss $\mathcal{L}_{\mathrm{int}}$, defined as the expected task loss after all concept representations are intervened on, is, under certain assumptions, equivalent to maximising sufficiency and has been shown to improve task performance under intervention~\citep{zarlenga_defense_2026}. 
However, whether these criteria are sufficient
to prevent intervention harm remains to be rigorously established.

Crucially, interventions in real-world scenarios are rarely exhaustive, as CMs may contain hundreds of concepts, and users generally cannot be expected to verify every concept. Hence, interventions are typically \emph{partial}: an overseer may inspect and correct only a small number of concepts considered relevant to the task at hand.
However, reliable partial intervention requires a stronger notion of intervenability than that driven by $\mathcal{L}_{\mathrm{int}}$. In practice we want CMs to avoid performance degradation when \emph{any} subset of erroneous concepts is corrected, even when other concepts remain uncorrected. This exposes a distinction between \emph{endpoint intervention harm}, which $\mathcal{L}_{\mathrm{int}}$ minimises, and \emph{partial intervention harm}, which remains understudied, particularly for leaky CMs where benign leakage is encouraged via $\mathcal{L}_{\mathrm{int}}$. In incomplete settings, the former is actually a special case of the latter, since the supervised concept set does not extend to all task-relevant concepts. Therefore, we believe it is critical to better understand the prevalence and consequences of partial intervention harm.\looseness=-1

With this in mind, this work develops a theoretical and empirical framework for studying endpoint and, more importantly, partial intervention harm. Specifically, our contributions are: (1) we show that sufficiency and localisation do \emph{not} guarantee that correcting a concept improves task performance; (2) we identify concept-set incompleteness as an important determinant of intervention harm;
(3) we distinguish \emph{endpoint} from \emph{partial} intervention harm and show that $\mathcal{L}_{\mathrm{int}}$, while effective at reducing endpoint harm, does not reliably prevent partial harm, especially in incomplete settings; and (4) we introduce $\mathcal{L}_{\mathrm{mono}}$, a complementary objective that penalises increases in task loss along partial intervention paths. Combining $\mathcal{L}_{\mathrm{mono}}$ and $\mathcal{L}_{\mathrm{int}}$ substantially reduces the prevalence and magnitude of partial intervention harm while preserving concept and task predictive performance. Together, these results provide a practical approach to encourage benign leakage under incompleteness while preserving high task and concept accuracy and avoiding intervention harm, even under partial interventions.\looseness=-1

\section{Related Work}

\mypara{Concept-based Models.}
Although diverse in architecture and purpose~\citep{knab2026s}, most Concept-based Models (CMs) are best framed as Concept Bottleneck Models (CBMs)~\citep{koh2020concept}. At a high level, a CBM $(\encod, \head)$ is a compositional deep neural network (DNN) formed by a \textit{concept encoder} $\encod: \mathcal{X} \rightarrow \mathcal{C}$ and a \textit{task predictor} $\head: \mathcal{C} \rightarrow \mathcal{Y}$. Intuitively, the concept encoder $\encod(\mathbf{x})$ maps input features $\mathbf{x} \in \mathcal{X}$ to a set of \textit{concept representations} $\widehat{\ck} \in \mathbb{R}^{k \times d}$ aligned with human-interpretable units of information $\ck \in \{0, 1\}^k$ (e.g., colours, textures). The task predictor then uses these representations to predict a downstream task $\widehat{y} = \head(\widehat{\ck})$. Concept representations can be scalars (i.e., $d = 1$)~\citep{chen_concept_2020, havasi2022addressing,  yuksekgonul2022post, vandenhirtz2024stochastic}, or richer high-dimensional vectors (i.e., $d > 1$)~\citep{espinosa2022concept, kim_probabilistic_2023, xu2024energy}.
In practice, all CMs require a concept-annotated training set of triplets $(\mathbf{x}, \ck, y)$ to align their intermediate representations with concepts $\ck$ useful for a downstream task $y$. These labels may come from human-annotated datasets~\citep{ramaswamy2023overlooked, skirzynski2026measuring}, be extracted from a pre-trained model's representations~\citep{alvarez-melis_towards_2018, yuksekgonul2022post, de2026mechanistic_cbm}, or be discovered using foundation models~\citep{oikarinen_label-free_2023, yang2023labo, rao2024discover}.\looseness=-1


\mypara{Concept Interventions.}
A key property of CBMs is that they support \emph{concept interventions}: at test time, an expert can correct mispredicted concept scores, or inject concept-based counterfactuals in a CM's reasoning, by directly modifying a concept representation $\widehat{\ck}_i$ so that it is aligned with its corresponding ground-truth concept $c_i$. Generally, concept interventions are expected to increase a CM's downstream task accuracy~\citep{koh2020concept}. However, recent works have shown that not all concept interventions lead to equivalent  improvements~\citep{chauhan_interactive_2023, espinosa2023learning}, and some may even decrease a model's performance in ill-conditioned CBMs~\citep{koh2020concept, zarlenga2023towards}. As a result, follow-up research has explored how to improve the impact of interventions by (1) devising mechanisms to propagate interventions~\citep{vandenhirtz2024stochastic, singhi2024improving}, (2) learning to reapply previous interventions~\citep{steinmann2023learning}, or (3) introducing the ability for the CM to explicitly defer concepts to experts~\citep{pugnana2026deferring}.\looseness=-1

\mypara{Information Leakage.}
Concept representations can encode more information about other concepts or the downstream task than their corresponding ground-truth concepts do, a phenomenon called \emph{information leakage}~\citep{margeloiu2021concept, mahinpei2021promises}. Intuitively, leakage can be seen as a form of shortcut learning, where the way a CM uses a concept misaligns with how a human expert would use it~\citep{geirhos2020shortcut, marconato2023not, enouen_debugging_2026}. As such, leakage is more likely to appear when CMs are trained on \textit{incomplete} concept sets that do not fully describe the downstream task. Properly understanding leakage, when it happens, and its consequences has led to a rich literature on the topic. Works in this area vary from proposing ways of measuring leakage~\citep{zarlenga2023towards, makonnen_measuring_2025, parisini2025leakage}, to exploring mechanisms for detecting~\citep{marconato2022glancenets, zarlenga2025avoiding}, controlling~\citep{ragkousis2024tree, parisini2026concept}, and mitigating~\citep{havasi2022addressing, almudevar2026there} leakage. More relevant to understanding partial intervention harm, \citet{zarlenga_defense_2026} recently argued that leakage can be \textit{benign}, in the sense that it does not affect a CM's performance and intervenability, when concept representations are \textit{sufficient} and \textit{localised}. There, the authors argue that these properties can be encouraged by minimising the task loss when all concepts are intervened on, a loss they call $\mathcal{L}_\text{int}$.\looseness=-1


\section{Understanding Intervention Harm}

\CFR{In this paper, we show that, although useful, sufficiency and localisation criteria are \textit{not} sufficient to guarantee no intervention harm when leakage is allowed in a CM's representations. To show this, we first describe our problem setup and then formalise sufficiency, localisation, and intervention harm.\looseness=-1}

\subsection{Problem Setup}
\label{sec:known-unknown-concepts}

Let $\mathbf{x} \in \mathcal{X} \subseteq \mathbb{R}^n$ denote an $n$-dimensional feature representation (e.g., image pixels) of a realisation of random variable (r.v.) $X$. Throughout this paper, we study supervised learning tasks where each sample $\mathbf{x}$ is associated with a task label $y \in \mathcal{Y} \subseteq \mathbb{R}$ (e.g., bird species), which itself is a realisation of a r.v.\ $Y$. For each sample $\mathbf{x}$, let $\ck \in \mathcal{C} \subseteq \{0, 1\}^k$ be a vector of $k$ known binary concept labels relevant to $Y$ (e.g., ``has red wing'', ``has hooked beak''). We assume that each binary concept label $\ck_i$ in this vector is a realisation of a r.v.\ $C_i$, and use $C$ to represent the r.v.\ corresponding to the entire known concept set $\ck$. Here, we say a concept set $\mathcal{C}$ is \textit{incomplete} when $\ck$ cannot fully explain the task $y$. From an information-theoretic perspective, incompleteness therefore implies $H(Y \mid C) > 0$.\looseness=-1

Throughout this paper, we work under the realistic assumption that the set of training concept labels for a CM is incomplete. In this case, we use $\mathbf{u} \in \mathcal{U} \in \{0, 1\}^{k^\prime}$ to describe a vector containing all concepts not in $\mathcal{C}$ that are relevant to the task (which we think of as realisations of a r.v.\ $U$ representing all \textit{unknown} task-relevant concepts).  Hence the task outcome is a function of $\mathcal C \times \mathcal U$. 
As such, we assume there is an optimal CM $(\head^* \circ \encod^*): \mathcal{X} \rightarrow \mathcal{Y}$ that first predicts the known and unknown concepts  $\encod^*(\mathbf{x}) = (\ck ,\cu ) \in \Ccal \times \Ucal$ from the input features $\mathbf{x} \in \mathcal X$, and then maps $(\ck ,\cu )$ to a downstream task prediction $\widehat{y}$ using a second function $\head^*(\ck ,\cu ) = \widehat{y}$.
We emphasise that, in contrast to this optimal predictor, a real-world CM trained on this task can observe only triples $(\mathbf x,\ck ,y)$ during training, while the corresponding unknown-concept state $\cu $ remains unobserved.\looseness=-1

\mypara{Residual and Hybrid CBMs.}
\label{sec:concept-residual-representation}
To overcome the information bottleneck that incompleteness imposes, several works propose introducing residual information pathways that the CM can use to capture information in $Y$ that is not present in the training concepts $C$~\citep{havasi2022addressing, espinosa2022concept, yuksekgonul2022post, sawada_concept_2022}. A common implementation of this strategy is a Hybrid CBM~\citep{mahinpei2021promises}, where the output of the concept encoder $\encod$ is extended so that it produces both a set of aligned representations $\widehat{\ck}$ and a set of unaligned, free \textit{residual} representations $\mathbf{z}$. Given the expressivity of this approach, we study leaky CMs in concept-incomplete settings using Hybrid CBMs as our main object of study. Therefore, from here on, we extend the CBM-based formulation of CMs so that the concept encoder $ \encod (\mathbf{x}) $ explicitly outputs both a set of soft concept scalar representations $\widehat{\ck} \in [0, 1]^k$ and a residual pathway representation $\mathbf{z} \in \mathbb{R}^{\mu}$. Similarly, we assume the label predictor's input now includes this residual path (i.e., $\head$ now maps $\mathcal{C} \times \mathbb{R}^{\mu}$ to $\mathcal{Y}$). This enables us to work with expressive leaky CMs that can still operate in incomplete settings, while simplifying concept representations to scalars and explicitly encoding the residual path.\looseness=-1

\subsection{Benign Leakage, Sufficiency, and Localisation}
\label{sec:benign-leakage}
Introducing a residual pathway like that seen in Hybrid CBMs risks losing the model's intervenability~\citep{almudevar2026there}, defined as its ability to show monotone increases in task accuracy as one increases the number of concept representations aligned to their ground truth values~\citep{vandenhirtz2024stochastic}. Nevertheless, recent work by \citet{zarlenga_defense_2026} argues that leakage does not necessarily need to affect a model's intervenability, and can therefore be \textit{benign}, so long as two information-theoretic conditions are satisfied:\looseness=-1

\mypara{Sufficiency.}
The first claimed condition for benign leakage, termed \textit{sufficiency}, can be described as:
\begin{equation}
    \MI(Y ; Z \mid C )
    \approx
    \MI(Y ; X\mid C ),
    \label{eq:zarlenga-sufficiency}
\end{equation}
where $I(\cdot)$ is the mutual information, and $Z$ is the r.v. representing the residual representations $\mathbf{z}$. Intuitively, this condition requires that all information about the downstream task $Y$ that is present in the features $X$ but not in the concepts $C$ be encoded in the residual representations $\mathbf{z}$.
To capture how closely sufficiency is satisfied, we define the \textit{sufficiency discrepancy} as
\begin{equation}
    \Delta_{\mathrm S} = \big| \MI(Y ; X \mid C ) - \MI(Y ; Z \mid C ) \big|.
\end{equation}
Sufficiency holds exactly when $\Delta_{\mathrm S}=0$, and approximately when $\Delta_{\mathrm S}$ is small.\looseness=-1

\mypara{Localisation.}
The second condition deemed important for leakage to be benign is \textit{localisaton}.
For Hybrid CBMs, our main object of study, each ground-truth concept $c_i$ is represented by a single scalar dimension $\widehat{c}_i$, and the model generates a single, explicit residual representation $\mathbf{z}$. Hence, one can think of each concept representation $\widehat{c}_i$ as associated with the same shared block of residual variables $\mathbf{z}$. In this context, localisation can be formally expressed as:\looseness=-1
\begin{equation}
    \MI\!\left(
        Y; C_i
        \mid C_{-i}
    \right)
    \approx
    \MI\!\left(
        Y ; \widehat C_i
        \mid
        Z, \widehat{C}_{-i}
    \right),
    \label{eq:zarlenga-localisation}
\end{equation}
where $\widehat{C}$ is the r.v. associated with the concept representations $\widehat{\ck}$, and the subscript \(-i\) denotes all components in a variable except \(i\). Intuitively, this condition requires all information about a known concept $c_i$ to be localised into its corresponding representation $\widehat{c}_i$, rather than spread across the shared residual variables $\mathbf{z}$.
Similarly to sufficiency, we define the localisation discrepancy for concept \(i\) as \begin{equation}
    \Delta_{\mathrm L,i} = \big|\MI(Y; C_i \mid C_{-i}) - \MI(Y ; \widehat C_i \mid Z, \widehat{C}_{-i})\big|.
\end{equation}
Localisation holds exactly when \(\Delta_{\mathrm L,i}=0\), and approximately when \(\Delta_{\mathrm L,i}\) is small.\looseness=-1

\subsection{Intervention Harm}
To better study intervenability under benign leakage, it is helpful to first define intervention harm:\looseness=-1
\begin{definition}[Intervention harm]
\label{def:intervention-harm}

Consider a (potentially residual) CM and let $(\widehat{\mathbf c},\mathbf z)$ denote the model-predicted representation before intervention. For any subset of known concepts $S\subseteq \mathcal{C}$, define the oracle-intervened concept representation $\mathbf{\widehat c}^{(S)}$ by the vector of concept representations where
\[
    \widehat c^{(S)}_i
    :=
    \begin{cases}
        c_i, & i\in S,\\
        \widehat c_i, & i\notin S,
    \end{cases}
\]
Here we assume that, if the CM is a residual model, the residual representation $\mathbf z$ is held fixed during intervention.
Notice that, by construction, $\mathbf{\widehat{c}}^{(\varnothing)}=\mathbf{\widehat{c}}$. For a measurable task loss $L:\mathcal Y\times \mathcal Y\to\mathbb R$, define the \textit{intervention task risk} $\Risk_{S}$ as 
\begin{equation}
    \Risk_{S}
    :=
    \E_{(\mathbf{x}, \ck, y) \sim P_{\mathcal{D}}}\!\left[
        L\!\left(
            y,
            \thead(\widehat{\mathbf c}^{(S)},\mathbf z)
        \right)
    \right],
    \label{eq:intervention-pattern-risk}
\end{equation}
where $P_{\mathcal{D}}$ is the data distribution. 
Assuming $\Risk_{\varnothing}$ is finite, the expected intervention harm associated with correcting the concepts in $S$ is therefore defined as
$
    \eta(S)
    :=\Risk_{S}-\Risk_{\varnothing}$.
    \label{eq:general-intervention-harm}
 We say that intervention on $S$ is \emph{harmful} if $\eta(S)>0$ and
\emph{safe} if $\eta(S)\leq 0$. When $S\subsetneq \mathcal{C}$, $\eta(S)$ describes
\emph{partial intervention harm}; when $S=\mathcal{C}$, it describes
full, or \emph{endpoint}, harm. By construction, $\eta(\varnothing)=0$.

\end{definition}

\subsection{Sufficiency and Localisation Do Not Prevent Intervention Harm} \label{sec:harm-under-benign-leakage} 
We now consider whether sufficiency and localisation are indeed \textit{sufficient} to prevent intervention harm and thus prescribe benign leakage. We find that for a concept classifier with non-zero predictive error (the setting where concept interventions are warranted), intervention harm can be arbitrarily large even when sufficiency and localisation conditions hold arbitrarily well:\looseness=-1

\begin{theorem}[Interventions can be arbitrarily harmful under approximate benign leakage]
\label{thm:noisy-xor-correction-harm}

For every \(p\in(0,1/2)\) and every \(n>0\), there exists a finite data-generating process, concept representations $\widehat \ck \sim \widehat C$, residuals $\mathbf{z} \sim Z$ such that: \textbf{(i)} $\widehat C_i$ is misaligned with ground-truth concept $C_i$ with probability \(p\); \textbf{(ii)} localisation holds exactly
but sufficiency discrepancy is $\Delta_{\mathrm S}=\mathfrak{h}(p)$, where \(
        \mathfrak{h}(p)
        :=
        -p\log p-(1-p)\log(1-p)
    \) 
is the binary entropy; \textbf{(iii)} the  prediction of the optimal task classifier $\head$ is incorrect with probability \(p\);  and, \textbf{(iv)} replacing \(\widehat C\) by its ground-truth value \(C\), while holding \(Z\) fixed, strictly increases expected squared loss:
    \begin{equation}
        \E\!\left[
            \bigl(
                Y-\thead(C,Z)
            \bigr)^2
        \right]
        -
        \E\!\left[
            \bigl(
                Y-\thead(\widehat C,Z)
            \bigr)^2
        \right]
        \geq
        np
        >
        0.
        \label{eq:noisy-xor-positive-harm}
    \end{equation}



Equivalently, this implies that, for every \(\varepsilon>0\) and every \(M>0\), there exists an example that satisfies 
\begin{equation}
    \Delta_{\mathrm S}<\varepsilon,
    \qquad
    \Delta_{\mathrm L,1}=0,
    \qquad
    \eta(\mathcal{C})>M.
    \label{eq:noisy-xor-arbitrarily-benign}
\end{equation}
\end{theorem}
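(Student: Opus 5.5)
The plan is to build an explicit ``noisy XOR'' example with a single known concept and a single unknown concept. Let $C, U$ be independent $\mathrm{Bernoulli}(1/2)$, let $X=(C,U)$, and set the task to $Y = \sqrt{n}\,\mathbb 1[C \oplus U = 1]$, or more conveniently $Y=a\,(C\oplus U)$ with $a^2 \ge n$. The concept predictor flips the concept: $\widehat C = C\oplus N$ with $N\sim\mathrm{Bernoulli}(p)$ independent of $(C,U)$. This gives (i) directly. The residual is chosen to leak $U$ through the noisy concept, $Z = U \oplus N$, rather than $U$ itself.

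The key observation is that $\widehat C\oplus Z = C\oplus U$ exactly. Hence the optimal head on $(\widehat C,Z)$ is $\head(c,z)=a\,(c\oplus z)$, and it recovers $Y$ perfectly on the model's own representations. Replacing $\widehat C$ by $C$ gives $\head(C,Z)=a\,(C\oplus U\oplus N)$, which is wrong exactly when $N=1$. Its squared loss is therefore $a^2 p \ge np$, while the pre-intervention loss is $0$, which proves (iv).

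For (iii), ``prediction incorrect with probability $p$'' should be read as the error of $\head$ evaluated on ground-truth concepts. If the intended reading is instead the Bayes-optimal predictor of $Y$ from $(C,Z)$, I would check that it errs with probability $p<1/2$ by predicting $a(C\oplus Z)$. Since $(C,Z)$ is noisy, this matches the statement's ``incorrect with probability $p$''.

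Next I compute the information quantities.

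\textbf{Sufficiency.} $\MI(Y;X\mid C)=H(Y\mid C)=\log 2$, because $U$ is uniform and independent of $C$. For the residual, $\MI(Y;Z\mid C)=H(Y\mid C)-H(Y\mid C,Z)$. Given $C$, $Y$ is determined by $U$, and $Z=U\oplus N$ is a binary symmetric channel with crossover $p$, so $H(Y\mid C,Z)=\mathfrak h(p)$. This gives $\Delta_{\mathrm S}=\mathfrak h(p)$, which is (ii).

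\textbf{Localisation.} With $k=1$, $C_{-1}$ and $\widehat C_{-1}$ are empty. The left side is $\MI(Y;C)=0$, because $C\oplus U$ is independent of $C$. The right side is $\MI(Y;\widehat C\mid Z)$. Given $Z$, $Y$ is a deterministic function of $\widehat C$, namely $a(\widehat C\oplus Z)$, and $\widehat C$ is uniform given $Z$. So the right side equals $\log 2\neq 0$, and localisation fails.

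This is the main obstacle: the naive construction gives $\Delta_{\mathrm L}=\log 2$ rather than $0$. I would fix it by adding a second known concept, or by redefining the task so that $C$ carries real conditional information. The first attempt is to let $C_{-1}$ include a copy that makes both sides equal. A cleaner route is to take $k=2$ concepts $C_1,C_2$ with $Y=a(C_1\oplus U)$, and to include $U$-information in a way that both $\MI(Y;C_1\mid C_2)$ and $\MI(Y;\widehat C_1\mid Z,\widehat C_2)$ equal $\log 2$. The left side is $H(Y\mid C_2)-H(Y\mid C_1,C_2)$, which can be made $\log 2$ if $U$ is a deterministic function of $C_2$; for example, $U=C_2$ is a hidden duplicate that is absent from $Z$ but predicted only noisily. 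Then I would re-verify sufficiency under this change.

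Failing that, I would match the localisation quantities by choosing the marginal of $C$ so that $\MI(Y;C)=\MI(Y;\widehat C\mid Z)$ holds exactly, tuning a bias parameter by continuity. I expect this balancing of the localisation equality, while keeping $\Delta_{\mathrm S}=\mathfrak h(p)$, to be the delicate step.

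Finally, the ``equivalently'' clause follows from the first part. Given $\varepsilon>0$ and $M>0$, choose $p$ small enough that $\mathfrak h(p)<\varepsilon$, which is possible since $\mathfrak h(p)\to0$ as $p\to0$, and then choose $n>M/p$. Then $\eta(\mathcal C)\ge np>M$.
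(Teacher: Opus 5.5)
Your construction does not prove the statement. As you note yourself, it gives $\Delta_{\mathrm L,1}=\log 2$, and exact localisation is the heart of the theorem: localisation holds, yet correction is harmful. Neither repair you sketch closes this gap.

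\emph{Tuning the marginal of $C$ cannot work.} With $U$ fair, $Y=a(C\oplus U)$ is independent of $C$ for every law of $C$, so $\MI(Y;C)=0$. Meanwhile $\MI(Y;\widehat C\mid Z)=H(C\oplus U\mid U\oplus N)\ge H(C\oplus U\mid U,N)=H(C)$, and this equals $\mathfrak h(p)$ when $C$ is constant. So the right-hand side is always strictly positive, and no continuity argument can make the two sides equal.

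\emph{The $k=2$ repair leaves the setting.} Taking $U=C_2$ makes the concept set complete, so $\MI(Y;X\mid C)=\MI(Y;Z\mid C)=0$ and $\Delta_{\mathrm S}=0\neq\mathfrak h(p)$. You also have not checked localisation for both concepts, or that endpoint harm survives.

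\emph{Claim (iii) also fails for your example.} The head $a(\widehat C\oplus Z)$ is exactly correct on the model's own representations, so its observational error is $0$, not $p$. Reading (iii) as the error on ground-truth concepts changes the claim.

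The missing idea is that an XOR task destroys the left-hand side of localisation, because $Y$ then carries no marginal information about $C$. The paper instead makes $Y$ an injective encoding of $(C,U)$, namely $Y=C+bU$ with $b=\sqrt{n+1}>1$. Then $C$ is a function of $Y$, so $\MI(Y;C)=H(C)=\log 2$.

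The paper also takes the residual relative to the noisy concept: $\widehat C=C\oplus E$ and $Z=\widehat C\oplus U$. Then $\widehat C\perp Z$, and $\widehat C=Z\oplus U$ is a function of $(Y,Z)$. Hence $\MI(Y;\widehat C\mid Z)=\log 2$ and $\Delta_{\mathrm L,1}=0$, while sufficiency gives $\Delta_{\mathrm S}=H(U\mid U\oplus E)=\mathfrak h(p)$, as in your computation.

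The head $f(c,z)=c+b(c\oplus z)$ satisfies $f(\widehat C,Z)-Y=\widehat C-C$. So it errs exactly when $E=1$, with squared error $1$, which gives (iii) and observational risk $p$. After correction, $f(C,Z)=C+b(E\oplus U)$ decodes $1-U$ when $E=1$, with squared error $b^2$. The harm is therefore $(b^2-1)p=np$. Your final $\varepsilon$--$M$ argument then applies unchanged.
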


See Appendix~\ref{proof_thm_1} for a proof. Theorem \ref{thm:noisy-xor-correction-harm} shows that localisation and arbitrarily approximate sufficiency are not sufficient conditions to guarantee benign leakage, and intervention harm can be arbitrarily large in their presence. The key proof construction shows that correcting $\widehat C$ while holding $Z$ fixed can change how the task head uses the joint representation, relocating rather than removing prediction error. Crucially, this mechanism extends to partial intervention, because it arises from a concept-correction transition rather than a property of endpoint concept  intervention, motivating the study of \emph{partial} intervention trajectories.\looseness=-1

\section{Minimising Partial Intervention Harm}
\label{sec:intervention-loss}

Recently, \citet{zarlenga_defense_2026} showed that, under certain idealised conditions, sufficiency can be maximised by minimising the intervention loss \(\mathcal{L}_{\mathrm{int}}=\Risk_{\mathcal C}\), where \(\Risk_{\mathcal C}\) is the task risk after ground-truth intervention on all supervised concepts. Minimising $\mathcal{L}_{\mathrm{int}}$ has also been shown to improve model performance following interventions. However, our theoretical results show that sufficiency and localisation are not sufficient to prevent partial intervention harm.
Moreover $\mathcal{L}_{\mathrm{int}}$, places no direct constraint on intermediate states in which only a subset of concepts has been corrected. Hence, while it may aid in preventing intervention harm when all concepts are intervened, which we refer to as \textit{endpoint} intervention harm, it cannot be expected to prevent \textit{partial} intervention harm.\looseness=-1

As we discuss in greater detail in Section~\ref{sec:experiments}, we empirically observe that $\mathcal{L}_\text{int}$ alone is not sufficient to prevent partial intervention harm in incomplete settings. Since partial interventions are the expected use case for CMs in practice, it is important to understand what training incentives may eliminate partial intervention harm. Therefore, we argue that a stronger requirement to limit partial intervention harm is \emph{monotonicity}, where correcting an additional concept should not increase task risk. This implies that, for every \(S\subseteq\mathcal C\) and \(i\in\mathcal C\setminus S\), we want $\Risk_{S\cup\{i\}}-\Risk_S \leq 0.$ If this condition holds for every such pair, risk is non-increasing along every sequence of concept corrections; hence, no partial or full interventions can be harmful. \looseness=-1

\mypara{Monotonic Intervention Loss.}
Inspired by this notion, we introduce the \emph{monotonic intervention loss} $\mathcal L_{\mathrm{mono}}$. To understand this loss, consider an \emph{intervention edge}
$
    S\rightarrow S\cup\{i\}, S\subsetneq\mathcal C, i\in\mathcal C\setminus S,
$ where $S$ is a set of already-intervened concepts and $i$ is an additional concept to be corrected. For each example, define
$
    \ell_S(\widehat{\mathbf{c}}, \mathbf{z}, \ck, y)
    :=
    L\!(
        y,
       \thead\!(
            \widehat{\mathbf c}^{(S)}_{\mathrm{sg}},
            \mathbf z
        )
    ),
$
where \(\operatorname{sg}(\cdot)\) denotes the stop-gradient operator and \(\widehat{\mathbf c}^{(S)}_{\mathrm{sg}}\) is constructed from \(\operatorname{sg}(\widehat{\mathbf c})\).
Our loss is then given by:
\begin{equation}
    \mathcal L_{\mathrm{mono}}
    =
    \E_{(S,i)\sim q}
    \E_{(\mathbf{x}, \ck, y) \sim P_\mathcal{D}}
    \left[
        \ell_{S\cup\{i\}}\left(\encod(\mathbf{x}), \ck, y\right) - \ell_S\left(\encod(\mathbf{x}), \ck, y\right)
    \right]_+
    \label{eq:mono-loss}
\end{equation}
where $[u]_+:=\max(u,0)$, \(q(S,i)\) specifies a distribution over intervention edges and we use $\ell_S\left(\encod(\mathbf{x}), \ck, y\right)$ to represent $\ell_S\left(\widehat{\mathbf{c}}, \mathbf{z},\ck,  y\right)$ when $\encod(\mathbf{x}) = (\widehat{\ck}, \mathbf{z})$. Thus, \(\mathcal L_{\mathrm{mono}}\) is zero whenever correcting one additional concept maintains or improves task performance and positive otherwise.\looseness=-1

Notice that we stop gradients through \(\widehat{\mathbf c}\) when optimising \(\mathcal L_{\mathrm{mono}}\). This prevents the model from reducing the monotonicity penalty by changing the supervised concept predictions themselves, which could trade concept fidelity for intervention compatibility. Gradients are retained through the residual representation \(\mathbf z\) and task head, allowing the downstream representation and prediction function to adapt so that ground-truth concept corrections are interpreted consistently.\looseness=-1

\mypara{Generating Intervention Trajectories.}
Computing $L_{\mathrm{mono}}$ becomes intractable with a large concept set, as the number of intervention edges increases exponentially with $k$. Hence, during training we estimate Eq.~\ref{eq:mono-loss} by Monte Carlo sampling. Specifically, for each minibatch we sample \(m_{\mathrm{mono}}\) intervention edges independently from \(q(S,i)\), where \(m_{\mathrm{mono}}\) is a training hyperparameter, and average their monotonicity penalties. In our experiments, we sample intervention depth uniformly: first
$
    |S|\sim
    \operatorname{Uniform}\{0,\ldots,|\mathcal C|-1\},
$ then a random concept ordering, taking the first \(|S|\) concepts to form \(S\) and the next concept as \(i\). Consequently, all intervention depths, from the first concept correction to the final correction, are represented during training.


\mypara{Integrating $\mathcal{L}_\text{mono}$.}
Our loss can be integrated into the traditional CBM training objective~\citep{koh2020concept}, extended to include $\mathcal{L}_\text{int}$, as follows:
\begin{equation}
    \mathcal L
    =
    \mathcal L_{\mathrm{task}}
    +
    \lambda_c\mathcal L_{\mathrm{concept}}
    +
    \lambda_{\mathrm{int}}\mathcal L_{\mathrm{int}}
    +
    \lambda_{\mathrm{mono}}\mathcal L_{\mathrm{mono}}.
    \label{eq:combined-mono-objective}
\end{equation}
Here, \(\lambda_c,\lambda_{\mathrm{int}},\lambda_{\mathrm{mono}}\geq0\) control the contribution of concept prediction, fully intervened task performance, and monotonic partial-intervention performance, respectively. Intuitively, \(\mathcal L_{\mathrm{int}}\) encourages good performance at the fully corrected endpoint, whereas \(\mathcal L_{\mathrm{mono}}\) directly penalises harmful transitions between intervention states. As we show next, these two losses complement each other.

\section{Experiments}
\label{sec:experiments}

In this section, we empirically show that partial intervention harm can occur even when CMs are trained with $\mathcal{L}_\text{int}$. Then, we show that introducing the $\mathcal{L}_\text{mono}$ loss defined in Section~\ref{sec:intervention-loss} can significantly mitigate these effects and is a complementary objective mitigating different aspects of harm w.r.t.\ to $\mathcal{L}_\text{int}$. To do this, we explore Hybrid CBMs with varying residual capacity 
on both synthetic datasets (Section~\ref{sec:synthetic-experiments}), where we fully control the data generation process, and real-world datasets (Section~\ref{sec:cub-experiments}). Across all experiments, we simulate incompleteness by controlling the size of the set of known concepts $\mathcal{C}$ used when training a Hybrid CBM. Appendix~\ref {app:experimental_details} provides details on model configurations, hyperparameters, and datasets.\looseness=-1

\subsection{Understanding Intervention Harm, Leakage, and Incompleteness}
\label{sec:synthetic-experiments}

\mypara{Setup.}
We first study intervention harm using a controlled linear--Gaussian model adapted from \citet{parisini2026concept} (Appendix~\ref{app:synthetic-methods}). In this setup, models are provided with $k=5$ supervised concepts while the task depends on some additional \(k^\prime\in\{2,5,10,15\}\) unobserved concepts. We vary the relative unobserved concept task contribution via a parameter \(\omega\in\{0.10,0.25,0.50,0.75,0.90\}\) (higher $\omega$ means higher task-relevance of unobserved concepts), and generate datasets using three concept covariance structures \CFR{(all concepts independent, block independence of observed and unobserved concepts, and random concept covariance)}. For each configuration, we train Hybrid CBMs with different \(\lambda_{\mathrm{int}}\) and \(\lambda_{\mathrm{mono}}\), and with and w/o stop-gradient through \(\widehat{\mathbf c}\), on five seeds. 
When studying harm, we exhaustively evaluate all \(2^5\) intervention states and \(5!\) correction orderings.

\begin{figure}[t!]
    \centering
    \includegraphics[width=\linewidth]
        {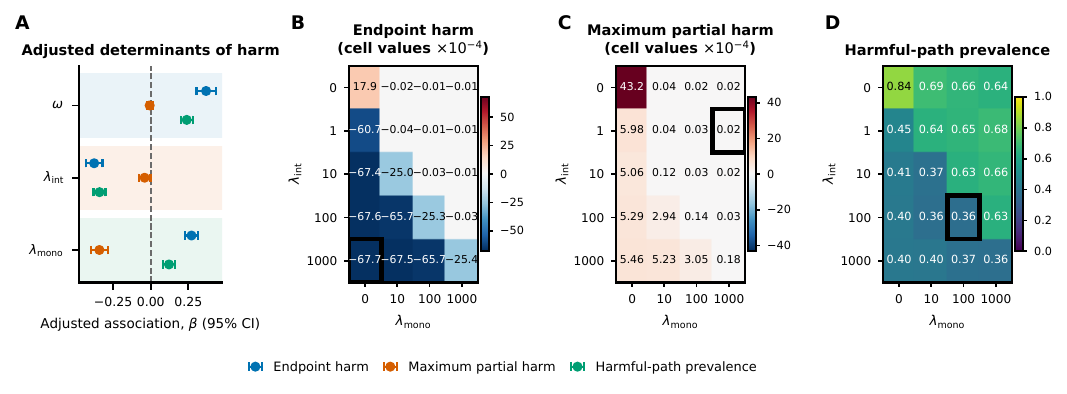}
    \caption{Endpoint ($\mathcal{L}_\text{int}$) and monotonic ($\mathcal{L}_\text{mono}$) objectives control complementary components of intervention harm.
    \textbf{(A)} Adjusted standardised associations of $\omega$, $\lambda_{\mathrm{int}}$ and $\lambda_{\mathrm{mono}}$ with endpoint harm, maximum partial harm, and harmful-path prevalence. Points show regression coefficients with 95\% confidence intervals.
    \textbf{(B)} Mean endpoint harm, \textbf{(C)} maximum partial harm, and \textbf{(D)} harmful-path prevalence across \(\lambda_{\mathrm{int}}\) and \(\lambda_{\mathrm{mono}}\).
    In these plots, we highlight the cell with the lowest mean.}
    \label{fig:synthetic-results}
    \vspace{-0.5em}
\end{figure}

\mypara{As unobserved concepts become more important, intervention harm increases.}
Our results in Fig.~\ref{fig:synthetic-results}A, where we show the association of each main model component to observed intervention harm, suggest that greater task dependence on unobserved concepts is associated with increased intervention harm. Specifically, we see that, in adjusted analyses, increasing \(\omega\) was associated with greater endpoint harm (\textcolor{NavyBlue}{blue}, \(\beta=0.368\), 95\% CI \(0.304\)--\(0.431\), \(P<10^{-28}\)) and a higher fraction of harmful partial-correction paths (\textcolor{ForestGreen}{green}, \(\beta=0.239\), 95\% CI \(0.200\)--\(0.278\), \(P<10^{-32}\)), but not with maximum partial harm (\textcolor{orange}{orange}, \(\beta=-0.007\), 95\% CI \(-0.027\)--\(0.013\), \(P=0.47\)). This suggests that incompleteness plays a significant role in the presence of both partial and endpoint intervention harm.\looseness=-1

\mypara{$\mathcal{L}_\text{int}$ and $\mathcal{L}_\text{mono}$ are complementary losses.}
Interestingly, our results in Figs.~\ref{fig:synthetic-results}B--D suggest that \(\mathcal L_{\mathrm{int}}\) and \(\mathcal L_{\mathrm{mono}}\) control complementary components of harm. 
Without either objective, both endpoint and partial intervention harm are positive (Fig.~\ref{fig:synthetic-results}B--D) and 84\% of intervention paths are harmful. Adding $\mathcal L_{\mathrm{int}}$ makes endpoint harm negative, but positive partial harm persists. Adding $\mathcal L_{\mathrm{mono}}$ to $\mathcal L_{\mathrm{int}}$ reduces maximum partial harm dramatically by up to 99.7\%, however $\mathcal L_{\mathrm{mono}}$ alone cannot reduce endpoint harm as effectively as $\mathcal L_{\mathrm{int}}$. Consistent with this complementarity, increasing $\lambda_{\mathrm{int}}$ primarily reduces endpoint harm ($\beta=-0.376$, $P<10^{-39}$), whereas increasing $\lambda_{\mathrm{mono}}$ primarily reduces maximum partial harm
($\beta=-0.341$, $P<10^{-33}$). Hence, balancing both objectives provides a strong reduction in general intervention harm.\looseness=-1

\mypara{Intervention-aware losses do not affect predictive performance.}
Finally, we see that the improvements in intervention harm do not need to come at the cost of either concept accuracy or task accuracy. As discussed in more detail in Appendix \ref{app:synthetic-robustness}, with moderate values of $\lambda_\text{int}$ and $\lambda_\text{mono}$, the task accuracy changes by at most 0.043\%. Moreover, the concept mean squared error (MSE) remains unaffected when $\mathcal{L}_\text{mono}$ uses its stop-gradient mechanism
(Appendix \ref{app:synthetic-robustness}).\looseness=-1

Together, these experiments show that concept incompleteness produces distinct endpoint- and partial-intervention failure modes, which can be mitigated by jointly optimising \(\mathcal{L}_{\mathrm{int}}\) and \(\mathcal{L}_{\mathrm{mono}}\).\looseness=-1

\subsection{Intervention Harm in Real-World Tasks}
\label{sec:cub-experiments}

\begin{figure}[t!]
    \centering
    \includegraphics[width=\linewidth]{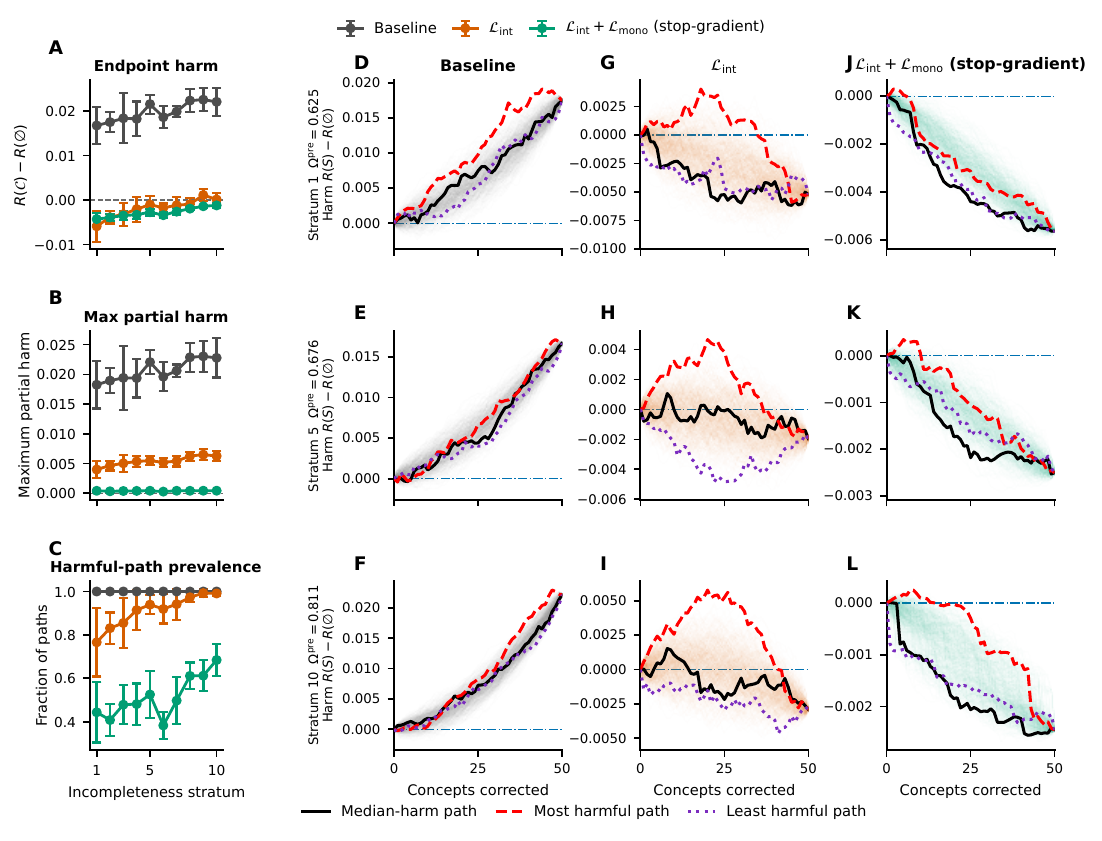}
    \caption{$\mathcal L_{\mathrm{mono}}$ suppresses harmful partial intervention trajectories on \texttt{CUB}. \textbf{(A)} Mean endpoint harm, \textbf{(B)} maximum partial harm, and \textbf{(C)} harmful-path prevalence across ten concept incompleteness strata (higher means higher incompleteness $\Omega_{\mathcal C}^{\mathrm{pre}}$). We show
    Hybrid CBMs trained (1) w/o intervention-aware losses (\textit{\textcolor{darkgray}{baseline}}) , (2) with \textcolor{orange}{$\mathcal{L}_{\mathrm{int}}$}, and (3) with \textcolor{ForestGreen}{$\mathcal{L}_{\mathrm{int}} + \mathcal{L}_{\mathrm{mono}}$}. Points show means, and bars show 95\% confidence intervals.
    \textbf{(D--L)} We show partial-intervention trajectories
    from \textbf{(D, G, J)} low-, \textbf{(E, H, K)} intermediate-, and \textbf{(F, I, L)} high-incompleteness strata. Columns show intervention paths for different modalities: \textbf{(D--F)} baseline, \textbf{(G--I)} $\mathcal{L}_\text{int}$ only, and \textbf{(J--L)} \(\mathcal L_{\mathrm{int}}+\mathcal L_{\mathrm{mono}}\). 
    Faint lines show all 512 sampled intervention paths, with the \textcolor{medianharm}{median}, \textcolor{mostharm}{most harmful}, and \textcolor{leastharm}{least harmful} paths highlighted.
    Harm is measured relative to the observational state, \(R(S)-R(\varnothing)\) (i.e., higher is worse).
    }
    \label{fig:cub-trajectories}
    \vspace{-0.5em}
\end{figure}

\mypara{Setup.}
We next test whether our findings above generalise on \texttt{CUB}~\citep{wah2011caltech}, a real-world vision dataset containing \(11{,}788\) images from 200 bird species and 312 binary image-level concepts. To study the effect of concept incompleteness in this task, we construct 80 sets containing 50 concepts and quantify the task-relevant information left unexplained by each set using a pre-model incompleteness score, \(\Omega_{\mathcal C}^{\mathrm{pre}}\) (Appendix ~\ref{app:cub-methods}). For each set, we train three Hybrid CBMs: one without intervention losses (\emph{baseline}), one trained with \(\mathcal L_{\mathrm{int}}\) (\emph{endpoint}), and one trained with \(\mathcal L_{\mathrm{int}}\) and \(\mathcal L_{\mathrm{mono}}\).
Because exhaustively evaluating all \(2^{50}\) intervention states is infeasible, we evaluate all models using the same 512 randomly sampled intervention orderings (see Appendix~\ref{app:cub-methods} for details).\looseness=-1

\mypara{Incompleteness strongly predicts the presence of harm in real-world data.}
As in our synthetic experiments, our results in Figs.~\ref{fig:cub-trajectories}A--C show a strong association between dataset incompleteness (captured by $\Omega_{\mathcal C}^{\mathrm{pre}}$) and both endpoint and partial-intervention harm. Specifically, we observe that increasing \(\Omega_{\mathcal C}^{\mathrm{pre}}\) is associated with greater endpoint harm (\(\beta=0.133\), 95\% CI \(0.084\)--\(0.181\), \(P<10^{-7}\)), maximum partial harm (\(\beta=0.078\), 95\% CI \(0.036\)--\(0.121\), \(P<10^{-3}\)), and harmful-path prevalence (\(\beta=0.194\), 95\% CI \(0.134\)--\(0.254\), \(P<10^{-9}\)). More importantly, we see that this harm is strongest when the Hybrid CBM is trained without any intervention-aware loss.
As real-world tasks are likely to be incomplete, these results suggest that, without proper conditioning (e.g., $\mathcal{L}_\text{int}$ and $\mathcal{L}_\text{mono}$ together), CMs are likely to exhibit both partial and endpoint intervention harm in practice.\looseness=-1

\mypara{$\mathcal{L}_\text{int}$ is not sufficient to avoid partial intervention harm.}
Our results in Fig.~\ref{fig:cub-trajectories}A show that training with \(\mathcal L_{\mathrm{int}}\) alone reduced mean endpoint harm from \(0.0198\) to \(-0.0019\). However, mean maximum partial harm remained \(0.0054\), and, crucially, \(91.2\%\) of sampled correction paths contained a harmful intermediate state (Fig.~\ref{fig:cub-trajectories}B--C). When looking at representative intervention trajectories across different incompleteness levels, shown in Figs.~\ref{fig:cub-trajectories}D--I, we see that \(\mathcal L_{\mathrm{int}}\) yields a harmless fully corrected state while still exhibiting partial intervention harm across several intermediate paths.\looseness=-1

In contrast, Figs.~\ref{fig:cub-trajectories}B--C show that adding \(\mathcal L_{\mathrm{mono}}\) (1) reduces mean maximum partial harm to \(0.0004\), a \(92.6\%\) reduction relative to training with \(\mathcal L_{\mathrm{int}}\) alone, and (2) reduces harmful-path prevalence from \(91.2\%\) to \(51.4\%\). In matched comparisons, the reductions are \(0.0050\) in maximum partial harm (95\% CI \(0.0047\)--\(0.0053\), \(P<10^{-49}\)) and 39.9 percentage points in harmful-path prevalence (95\% CI \(37.5\)--\(42.2\), \(P<10^{-47}\)). Corresponding trajectories show that \(\mathcal L_{\mathrm{mono}}\) substantially suppresses harmful intermediate excursions (Figs.~\ref{fig:cub-trajectories}G--L). All of these results suggest that, as in our synthetic task, $\mathcal{L}_\text{int}$ cannot guarantee no partial intervention harm (or benign leakage). Therefore, optimising across different intervention trajectories, as $\mathcal{L}_\text{mono}$ does, brings benefits beyond those brought by $\mathcal{L}_\text{int}$.\looseness=-1

\mypara{Model performance is preserved across training regimes.}
We further see that introducing intervention-aware losses has little effect on task accuracy: mean top-1 accuracy is \(77.94\%\), \(78.21\%\), and \(78.11\%\) for baseline, \(\mathcal L_{\mathrm{int}}\)-only, and \(\mathcal L_{\mathrm{int}}+\mathcal L_{\mathrm{mono}}\) models, respectively (see Appendix Fig.~\ref{fig:cub-performance}A). Although accuracy is slightly lower with combined training than with \(\mathcal L_{\mathrm{int}}\) alone (paired difference \(-0.10\) percentage points, 95\% CI \(-0.16\) to \(-0.04\), \(P=0.0024\)), it remained above baseline. Greater incompleteness is also associated with modestly reduced task accuracy (\(\beta=-0.186\), 95\% CI \(-0.347\)--\(-0.025\), \(P=0.024\); Appendix Fig.~\ref{fig:cub-performance}A). In terms of concept predictive performance, we see that adding \(\mathcal L_{\mathrm{mono}}\) reduces the concept MSE of endpoint training from \(0.1558\) to \(0.1461\) (paired reduction \(0.0097\), 95\% CI \(0.0087\)--\(0.0107\), \(P<10^{-30}\)). Nevertheless, this concept MSE  remains above the baseline value of \(0.1410\) (Appendix Fig.~\ref{fig:cub-performance}B).\looseness=-1

Taken together, these results confirm, on real-world data, that concept incompleteness is associated with endpoint and partial intervention harm, and that these are distinct failure modes. More importantly, however, our results strongly suggest that \(\mathcal L_{\mathrm{int}}\) reduces endpoint harm, whereas \(\mathcal L_{\mathrm{mono}}\) directly suppresses harmful partial interventions, suggesting both are required to encourage benign leakage.\looseness=-1

\section{Discussion and Conclusion}
\label{sec:discussion}


\mypara{Connecting Theoretical and Empirical Results.}
Our results establish two broader findings. First, we prove that sufficiency and localisation can hold even when interventions produce arbitrarily large intervention harm.
Second, we empirically complement these results by showing, in both synthetic and real-world tasks, that preventing endpoint harm via \(\mathcal L_{\mathrm{int}}\) is insufficient to ensure safe partial interventions. To address this,
we introduce \(\mathcal L_{\mathrm{mono}}\), a loss that mitigates partial intervention harm by minimising the CM's loss across distinct intervention trajectories. 
Contrary to previous intervention-aware losses that simulate training interventions~\citep{espinosa2023learning}, our loss 
avoids updating the concept representations (thereby focusing on adapting the label predictor to reduce partial harm), and does not require training an expensive sampling model.
Importantly, our results show that $\mathcal{L}_\text{int}$ and $\mathcal{L}_\text{mono}$ capture complementary aspects of harm: while $\mathcal{L}_\text{int}$ reduces harm from total concept corrections, $\mathcal{L}_\text{mono}$ reduces the prevalence and magnitude of harmful \textit{intervention paths}.\looseness=-1

\mypara{Partial Interventions and Out-of-Distribution Concept States.}
Sufficiency and localisation constrain how task-relevant information is represented, but not how the CM's task predictor responds to modifications of its input representations. This distinction is key in concept incompleteness, where the CM must rely on information represented outside the supervised concept set to maintain task performance. There, a CM may learn dependencies between the concept and residual representations that are disrupted when intervention changes $\widehat{\mathbf{c}}$ without updating $\mathbf{z}$, driving the joint representations $(\widehat{\ck}, \mathbf{z})$ out-of-distribution (OOD).
Crucially, such OOD intermediate states can occur even in completeness: intervening on a single concept without intervening on correlated or dependent concepts 
could generate an OOD concept vector $\widehat{\ck}$ and hamper the predictor's performance. This distributional shift is key to understanding why intervention harm is more likely as incompleteness increases, and why $\mathcal{L}_\text{int}$ struggles to eradicate partial intervention harm. As such, $\mathcal{L}_\text{mono}$ provides a mechanism for a CM to learn to better deal with potential OOD intermediate latent spaces, thereby significantly reducing partial intervention harm.\looseness=-1

Lastly, our results suggest a connection between intervention harm and shortcut learning~\citep{geirhos2020shortcut, marconato2023not}. 
If interventions increase task loss, this suggests the predictor has learned dependencies on the \emph{particular} joint distribution of predicted concepts and residual information, which may include errors in $\widehat{\mathbf{c}}$. This behaviour is consistent with shortcut learning where the predictor may exploit potentially spurious correlations between $\widehat{\mathbf{c}}$ and $\mathbf{z}$ (or within $\widehat{\mathbf{c}}$) that are predictive on the training distribution, but brittle under intervention. We hypothesise that $\mathcal{L}_{\mathrm{mono}}$ mitigates this behaviour by exposing the predictor during training to diverse combinations of correct and erroneous concept predictions, thereby reducing its reliance on brittle dependencies and encouraging predictions that remain robust across interventions.\looseness=-1

\mypara{Limitations and Future Work.}
We bring forth three limitations of our work. First, intervention trajectories in $\mathcal{L}_\text{mono}$ are randomly sampled, generating a disconnect between how we train them and how they are used afterwards (where they may be intervened according to some policy).
This motivates future work in exploring harm and monotonic training under varying policies.
Second, some partial intervention trajectories may result in implausible (i.e., OOD) intermediate states that could cause partial harm, even if full interventions would not be harmful.
Hence, future work could bias sampling in $\mathcal{L}_\text{mono}$ toward plausible states. 
Finally, sampling paths with equal probability may fail to cover distinct but meaningful regions of the intervention lattice. Therefore, future research can explore how to more efficiently sample paths that cover most important regions of the intervention lattice.\looseness=-1

\mypara{Conclusion.}
In this paper, we argued that intervention harm, and in particular partial intervention harm, is an understudied failure mode of CMs, directly undermining a key use case of these models. We proved that \textit{sufficiency} and \textit{localisation}, criteria previously suggested to encourage so-called \textit{benign leakage}, cannot guarantee that concept interventions improve downstream performance, and showed, across synthetic and real-world tasks, that concept-set incompleteness is associated with both endpoint and partial intervention harm. Moreover, we found that $\mathcal{L}_\text{int}$, a previous objective to encourage benign leakage, does not reliably prevent harmful partial corrections. To address this, we introduced $\mathcal{L}_\text{mono}$, a complementary objective that
substantially reduces the prevalence and magnitude of partial intervention harm while largely preserving concept and task performance. Overall, our results suggest that whether leakage is benign should be judged not only by the fully intervened endpoint, but also by considering all partial intervention paths an overseer may take.



\subsection*{AI use statement}
In this work, we used generative AI tools to: help develop theoretical models and conceptual frameworks, refine hypotheses, design and provide feedback on research methodology and experiments, support interpretation of results, and to assist in the formulation and writing of mathematical proofs. We did not use generative AI tools to generate synthetic data sets, implement methods, provide critical ingredients for proving mathematical claims, clean or reformat datasets, perform qualitative or thematic data analysis or assist with translation. Additionally, we used generative AI tools to create and edit software code and draft parts of the manuscript. We have reviewed all AI-assisted work substantially: we checked LLM-generated research ideas for potential plagiarism through a manual literature survey, LLM-generated code was verified and tested for correctness by multiple authors, all mathematical proofs and claims were reviewed and corrected by multiple authors and all LLM-generated text was reviewed and edited by multiple authors. We take responsibility for the final content of this work, including text, claims or artefacts produced with the aid of generative AI.

\subsection*{Ethics Statement}
This work investigates the safety and reliability of human intervention in concept-based models. The experiments use synthetic data and the established \texttt{CUB} benchmark and do not involve human participants, personal data, or sensitive attributes. No new datasets are collected or released. We have considered the potential for both beneficial and unintended uses of the proposed methodology and believe that explicitly characterising intervention harm contributes to the responsible development and evaluation of human-interpretable AI systems.

\subsection*{Reproducibility Statement}
We provide the assumptions and complete proofs of our theoretical results in the Appendix. Detailed experimental methods are provided for both the synthetic and \texttt{CUB} experiments, including data generation and preprocessing, dataset partitions, model architectures, training objectives, optimisation procedures, hyperparameters, intervention procedures, experimental designs, evaluation metrics, and statistical analyses. Random seeds and the procedures used to construct and match experimental conditions are specified where applicable. We additionally provide anonymised source code sufficient to reproduce the experiments, analyses, and figures reported in this work as supplementary material.

\section*{Acknowledgments}
CRSB is supported by a KCL AI+ Senior Fellowship. GV is supported by DSIT.

\bibliography{iclr2026_conference}
\bibliographystyle{iclr2026_conference}

\appendix
\setcounter{theorem}{0}
\section{Proof of Theorem \ref{thm:noisy-xor-correction-harm}}
We first restate the theorem more expansively:
\begin{theorem}[Interventions can be arbitrarily harmful under approximate benign leakage]
\label{app_thm:noisy-xor-correction-harm}

For every \(p\in(0,1/2)\) and every \(n>0\), there exists a finite data-generating process, concept representations $\widehat \ck \sim \widehat C$, residuals $\mathbf{z} \sim Z$ such that:
\begin{enumerate}
    \item $\widehat C_i$ is misaligned with ground-truth concept $C_i$ with probability \(p\). 
    \item 
    Localisation holds exactly
but sufficiency discrepancy is $\Delta_{\mathrm S}=\mathfrak{h}(p)$, where \(
        \mathfrak{h}(p)
        :=
        -p\log p-(1-p)\log(1-p)
    \) 
is the binary entropy. 
\item  The  task prediction of the optimal task classifier $\head$ is incorrect with probability \(p\).
\item Intervening by replacing \(\widehat C\) by its ground-truth value \(C\), while holding \(Z\) fixed, strictly increases expected squared loss:
    \begin{equation}
        \E\!\left[
            \bigl(
                Y-\thead(C,Z)
            \bigr)^2
        \right]
        -
        \E\!\left[
            \bigl(
                Y-\thead(\widehat C,Z)
            \bigr)^2
        \right]
        \geq
        np
        >
        0.
        \label{eq:noisy-xor-positive-harm}
    \end{equation}
\end{enumerate}

Equivalently, this implies that, for every \(\varepsilon>0\) and every \(M>0\), there exists an example that satisfies 
\begin{equation}
    \Delta_{\mathrm S}<\varepsilon,
    \qquad
    \Delta_{\mathrm L,1}=0,
    \qquad
    \eta(\mathcal{C})>M.
    \label{eq:noisy-xor-arbitrarily-benign}
\end{equation}
\end{theorem}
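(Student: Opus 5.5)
The plan is to give an explicit noisy-XOR construction with a single known concept ($k=1$), so that $C_{-1}$ is empty. Then verify (i)--(iv) by direct computation, and obtain the ``equivalently'' clause by choosing $p$ small and the scale large.

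\textbf{Construction.} Let $C,U,N$ be independent, with $C,U\sim\mathrm{Bern}(1/2)$ and $N\sim\mathrm{Bern}(p)$. Let $X=(C,U,N)$, or any finite injective encoding of it. Define
\[
\widehat C := C\oplus N,\qquad Z := U\oplus N,\qquad W:=C\oplus U,\qquad Y := a\,(W+\delta C),
\]
with a scale $a>0$ and a weight $\delta\in(0,1)$. Since $\delta\notin\{0,\pm1\}$, the map $(C,W)\mapsto Y$ is injective, so $Y$ determines $(C,W)$. The residual $Z$ leaks the noise $N$. As a result $\widehat C\oplus Z=W$ holds exactly, and the joint representation $(\widehat C,Z)$ recovers $W$ without error; only the $C$-component is uncertain. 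Item (i) is immediate, since $\widehat C\neq C$ exactly when $N=1$.

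\textbf{Information quantities.} For sufficiency, $X$ determines $Y$, so $\MI(Y;X\mid C)=H(W\mid C)=1$. Also $\MI(Y;Z\mid C)=1-H(U\mid Z)=1-\mathfrak h(p)$, which gives $\Delta_{\mathrm S}=\mathfrak h(p)$.

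For localisation I need $\MI(Y;C)=\MI(Y;\widehat C\mid Z)$.
\begin{itemize}
\item The left side is $H(C)=1$, because $Y$ determines $C$.
\item For the right side, first compute $H(Y\mid Z)=H(C,W\mid Z)$. Here $W$ is uniform and independent of $Z$, because $C$ is independent of $(U,N)$. Given $(W,Z)$, we have $C=W\oplus U$, and $H(U\mid Z)=\mathfrak h(p)$. Hence $H(Y\mid Z)=1+\mathfrak h(p)$.
\item Next, given $(\widehat C,Z)$ the variable $W$ is known, while $C=\widehat C\oplus N$ retains entropy $\mathfrak h(p)$. Hence $H(Y\mid \widehat C,Z)=\mathfrak h(p)$.
\item Subtracting gives $\MI(Y;\widehat C\mid Z)=1$, so $\Delta_{\mathrm L,1}=0$.
\end{itemize}
I expect this localisation check to be the main obstacle. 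A naive choice $Y=a(C\oplus U)$ makes $\MI(Y;C)=0$ while $\MI(Y;\widehat C\mid Z)=1$. That is why $Y$ must encode $C$ as well as $W$, and the $\delta C$ term is what makes the two sides match.

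\textbf{Task head and harm.} Take $\thead(\hat c,z):=a\,(\hat c\oplus z+\delta\hat c)$. This is the MAP and optimal $0$--$1$ predictor from $(\widehat C,Z)$, since $W$ is known exactly and $P(C=\widehat C\mid \widehat C,Z)=1-p>1/2$. It errs exactly when $N=1$, which gives (iii).

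The pre-intervention loss is
\[
\E\bigl[(Y-\thead(\widehat C,Z))^2\bigr]=a^2\delta^2p .
\]
After intervention, $\thead(C,Z)=a\,(W\oplus N+\delta C)$: the $C$-part is now correct, but the $W$-part flips whenever $N=1$. So the post-intervention loss is $a^2p$, and
\[
\eta(\mathcal C)=a^2p\,(1-\delta^2).
\]
Choosing $a^2\ge n/(1-\delta^2)$ gives (iv). The mechanism is that correcting $\widehat C$ while freezing $Z$ breaks the compensating noise $N$ shared by the two representations: the error is relocated from the $C$-part to the $W$-part rather than removed.

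\textbf{Equivalent form.} Given $\varepsilon,M>0$, pick $p$ with $\mathfrak h(p)<\varepsilon$, which is possible since $\mathfrak h(p)\to0$ as $p\to0$. Then pick $n=M/p+1$, so that $\eta(\mathcal C)\ge np>M$.
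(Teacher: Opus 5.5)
Your proof is correct and is essentially the paper's noisy-XOR construction after a change of variables: your $W=C\oplus U$ plays the role of the paper's omitted factor $U$, your $Z=U\oplus N$ coincides with the paper's $Z=\widehat C\oplus U$, and $Y=a\delta\,(C+\delta^{-1}W)$ is the paper's $Y=C+bU$ with $b=1/\delta$, rescaled by $a\delta$, so your harm $a^2(1-\delta^2)p$ is the paper's $(b^2-1)p$ times $(a\delta)^2$. The differences are only in the bookkeeping: you verify localisation via $H(Y\mid Z)-H(Y\mid \widehat C,Z)$ rather than via $\widehat C\perp Z$ and $H(\widehat C\mid Y,Z)=0$, and you justify the head as the MAP classifier rather than as the projection of $\E[Y\mid \widehat C,Z]$ onto $\mathcal Y$, which gives the same head.
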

    
\label{proof_thm_1}
\begin{proof}
To prove the theorem, it suffices to construct a finite generating process, representation and task head such that the stated conditions hold.

A random variable \(B\) is called a \emph{fair coin} if
\[
    B\sim\operatorname{Bernoulli}(1/2).
\]
Let \(C\) and \(U\) be independent fair coins, and let
\[
    E\sim\operatorname{Bernoulli}(p),
    \qquad p\in(0,1/2),
\]
be independent of \((C,U)\). Here, \(C\) is the known concept, \(U\) is an
omitted task-relevant factor, and $E=1[C\neq \hat C]$ indicates whether the learned concept
predictor makes an error. Fix any \(n>0\) and set
\[
    b:=\sqrt{n+1}>1.
\]
Define
\begin{equation}
    X=(C,U,E),
    \qquad
    Y=C+bU.
    \label{eq:noisy-xor-data}
\end{equation}
The four possible pairs \((C,U)\) give the four distinct values
\[
    (0,0)\mapsto0,\qquad
    (1,0)\mapsto1,\qquad
    (0,1)\mapsto b,\qquad
    (1,1)\mapsto b+1.
\]
Because \(b>1\), these values are distinct. Hence the optimal predictor $\head^* : \mathcal C \times \mathcal U \to \mathcal Y$ 
 is a bijection from \(\{0,1\}^2\) onto
\(\{0,1,b,b+1\}\), so both \(C\) and \(U\) are deterministic functions of
\(Y\).

We assume the residual satisfies $Z=\widehat C\oplus U$ where \(\oplus\) denotes XOR (addition mod 2) so that
\begin{equation}
    \widehat C=C\oplus E,
    \qquad
    Z=\widehat C\oplus U.
    \label{eq:noisy-xor-representation}
\end{equation}

Notice \(\widehat C\) is a fair coin. Indeed, for every
\(a\in\{0,1\}\),
\begin{align}
    \Pr(\widehat C=a)
    &=
    \sum_{e\in\{0,1\}}
    \Pr(E=e)\Pr(C=a\oplus e)
    \nonumber=
    \frac12\sum_{e\in\{0,1\}}\Pr(E=e)
    =
    \frac12.
    \label{eq:noisy-xor-predicted-concept-fair}
\end{align}

In addition, \(\widehat C\) and \(Z\) are independent.
Indeed, because \(U\) is a fair coin independent of \(\widehat C\), since the latter is a deterministic function of random variables independent of $U$, for any
\(a,z\in\{0,1\}\),
\begin{align}
    \Pr(\widehat C=a,Z=z)
    &=
    \Pr\bigl(
        \widehat C=a,\,
        \widehat C\oplus U=z
    \bigr)
    \nonumber\\
    &=
    \Pr\bigl(
        \widehat C=a,\,
        U=a\oplus z
    \bigr)
    \nonumber\\
    &=
    \Pr(\widehat C=a)\Pr(U=a\oplus z)
    \nonumber\\
    &=
    \Pr(\widehat C=a)\frac12.
\end{align}
Moreover,
\[
    \Pr(Z=z)
    =
    \sum_{a\in\{0,1\}}\Pr(\widehat C=a,Z=z)
    =
    \frac12.
\]
Hence
\[
    \Pr(\widehat C=a,Z=z)
    =
    \Pr(\widehat C=a)\Pr(Z=z),
\]
which proves
\begin{equation}
    \widehat C\perp Z.
    \label{eq:noisy-xor-concept-residual-independence}
\end{equation}
In particular,
\begin{equation}
    H(\widehat C\mid Z)
    =
    H(\widehat C)
    =
    \log 2.
    \label{eq:noisy-xor-concept-residual-entropy}
\end{equation}

For localisation, recall that \(U\) is a deterministic function of \(Y\).
Since
\[
    \widehat C=Z\oplus U,
\]
\(\widehat C\) is therefore a deterministic function of \((Y,Z)\). Thus
\[
    H(\widehat C\mid Y,Z)=0.
\]
Combining this with the independence established above gives
\begin{align}
    I(Y;\widehat C\mid Z)
    &=
    H(\widehat C\mid Z)
    -
    H(\widehat C\mid Y,Z)
    \nonumber\\
    &=
    \log 2.
    \label{eq:noisy-xor-representation-localisation}
\end{align}
Similarly, because \(C\) is a deterministic function of \(Y\),
\[
    H(C\mid Y)=0,
\]
and hence
\begin{equation}
    I(Y;C)
    =
    H(C)-H(C\mid Y)
    =
    H(C)
    =
    \log 2.
    \label{eq:noisy-xor-ground-truth-localisation}
\end{equation}
Therefore
\begin{equation}
    \Delta_{\mathrm L,1}
    =
    \left|
        I(Y;C)-I(Y;\widehat C\mid Z)
    \right|
    =
    0.
    \label{eq:noisy-xor-exact-localisation}
\end{equation}
Thus localisation holds exactly.

For sufficiency, 
note 
$$ I(Y;Z \mid C) = H(Y \mid C) - H(Y \mid Z,C) = \log 2 -\mathfrak{h}(p), $$
where
\[
    \mathfrak{h}(p)
    :=
    -p\log p-(1-p)\log(1-p)
\]
is the binary entropy,
since 
$$ H(Y \mid C) = H(U) = \log 2$$
and 
$$H(Y \mid Z,C)=H(U \mid Z) = H(U \mid U \oplus E) = \mathfrak{h}(p).  $$
On the other hand, 
$$ I(Y;X\mid  C) = H(U \mid C) - H(U \mid X,C) = \log 2-0.$$
 Thus the sufficiency condition holds:
\begin{align}
    \Delta_{\mathrm S}
    &:=
    I(Y;X\mid  C)
    -
    I(Y;Z\mid C)
    \nonumber\\
    &=
    \mathfrak{h}(p).
    \label{eq:noisy-xor-sufficiency-gap}
\end{align}

We now analyse the risk and intervention harm of the optimal predictor.
Under square loss, the optimal predictor $\mathcal C \times \mathcal Z \to \mathcal Y$ is the conditional expectation
$$ \head^* (c, z) = \mathbb E[Y \mid \hat C = c, Z=z ] =  c(1-2p)+p+b(z\oplus c).$$
Indeed, $Y=C+bU$, and
$$ \mathbb E[U \mid \hat C = c, Z=z ] = \mathbb E[Z \oplus \hat C \mid \hat C = c, Z=z ] =  z \oplus c,$$
while using the fact that $C \perp Z$ (which follows from essentially the same proof as that of $\hat C \perp Z$),  we find
$$  \mathbb E[C \mid \hat C = c, Z=z ]
=\mathbb E[c\oplus E \mid \hat C = c] = p+c(1-2p).$$

Notice the optimal predictor $\head^* $ outputs values in $\mathcal Y^* =\{p,1-p,p+b,b+1-p\}$ which do not belong to $\mathcal Y = \{0,1,b,b+1 \} $ as $p\in (0,1/2)$. 
By matching the output of $\head^*$ to the closest element in $\mathcal Y$, we obtain 
the task label predictor
\begin{equation}
    \head(c,z)
    :=
    c+b(c\oplus z).
    \label{eq:noisy-xor-head}
\end{equation}
In other words, $\head$ is the projection of the optimal predictor $\head^*$ onto $\mathcal Y$, obtained by identifying $\mathcal Y^* \sim \mathcal Y$,  which is used to predict labels.

Now, since $\widehat C \oplus Z =U$,
\[
    \thead(\widehat C,Z)=Y
    \quad\Longleftrightarrow\quad
    \widehat C=C
    \quad\Longleftrightarrow\quad
    E=0,
\]
and both the concept prediction and the  task prediction are
incorrect with probability \(p\).

In addition,
\begin{equation}
    \thead(\widehat C,Z)-Y
    =
    \widehat C-C.
    \label{eq:noisy-xor-observational-error}
\end{equation}
Because \(C,\widehat C\in\{0,1\}\), this difference has squared value \(1\)
exactly when \(\widehat C\neq C\), or equivalently when \(E=1\). Therefore
\begin{equation}
    \bigl(
        \thead(\widehat C,Z)-Y
    \bigr)^2
    =
    \mathds{1}_{\{E=1\}}
    =
    E.
    \label{eq:noisy-xor-observational-squared-error}
\end{equation}
It follows that the squared-error risk is
\begin{equation}
    R_{\mathrm{obs}}
    :=
    \E\!\left[
        \bigl(
            Y-\thead(\widehat C,Z)
        \bigr)^2
    \right]
    =
    \E[E]
    =
    p.
    \label{eq:noisy-xor-observational-risk}
\end{equation}

We next verify the benign-leakage criteria.

We now perform the usual known-concept-only concept correction
\begin{equation}
    (\widehat C,Z)\longmapsto(C,Z).
    \label{eq:noisy-xor-coordinate-correction}
\end{equation}
This intervention replaces the predicted concept by its ground-truth value
but leaves the realised residual \(Z\) exactly unchanged. In particular,
\[
    Z_{\mathrm{after}}
    =
    Z_{\mathrm{before}}
    =
    \widehat C\oplus U
    =
    C\oplus E\oplus U.
\]
The input supplied to the task head therefore changes as follows:
\begin{equation}
    \underbrace{
        \thead(\widehat C,Z)
    }_{
        \widehat C+b(\widehat C\oplus Z)
    }
    \quad\longmapsto\quad
    \underbrace{
        \thead(C,Z)
    }_{
        C+b(C\oplus Z)
    }.
    \label{eq:noisy-xor-head-intervention}
\end{equation}
Although \(Z\) itself is unchanged, its decoding changes because the task head
now XORs \(Z\) with \(C\) rather than with \(\widehat C\). Specifically,
\begin{align}
    C\oplus Z
    &=
    C\oplus(C\oplus E\oplus U)
    \nonumber\\
    &=
    E\oplus U.
    \label{eq:noisy-xor-stale-decoding}
\end{align}
Thus the corrected prediction is
\begin{equation}
    \thead(C,Z)
    =
    C+b(E\oplus U).
    \label{eq:noisy-xor-corrected-prediction}
\end{equation}

If \(E=0\), then \(\widehat C=C\), so the intervention changes neither the
concept coordinate nor the prediction. In this case,
\[
    E\oplus U=U,
    \qquad
    \thead(C,Z) = C + bU =Y.
\]
If \(E=1\), then the intervention correctly changes
\(\widehat C=1-C\) to \(C\), but the fixed residual remains
\[
    Z=\widehat C\oplus U.
\]
Consequently,
\[
    E\oplus U=1\oplus U=1-U,
\]
so the omitted factor decoded by the task head changes from \(U\) to \(1-U\).
Therefore
\begin{equation}
    \thead(C,Z)-Y
    =
    \begin{cases}
        0,
        & E=0,\\[1mm]
        b(1-U)-bU=b(1-2U),
        & E=1.
    \end{cases}
    \label{eq:noisy-xor-corrected-error}
\end{equation}

Since \(U\in\{0,1\}\),
\[
    \bigl(1-2U\bigr)^2=1.
\]
It follows pointwise that
\begin{equation}
    \bigl(
        \thead(C,Z)-Y
    \bigr)^2
    =
    \begin{cases}
        0, & E=0,\\
        b^2, & E=1,
    \end{cases}
    =
    b^2E.
    \label{eq:noisy-xor-corrected-squared-error}
\end{equation}
Taking expectations now gives, explicitly,
\begin{align}
    R_{\mathrm{corr}}
    &:=
    \E\!\left[
        \bigl(
            Y-\thead(C,Z)
        \bigr)^2
    \right]
    \nonumber\\
    &=
    \E[b^2E]
    =
    b^2\Pr(E=1)
    =
    b^2p.
    \label{eq:noisy-xor-corrected-risk}
\end{align}
Here, it is worth highlighting how correcting a concept differs from improving the task head: before corrections, we had $\thead(\hat C,Z)-Y = \hat C - C$, so that better concept predictions result in better predictions overall, while after intervention we see from above that now $ \thead(C,Z)-Y$ only depends on $U$, and produces greater risk. Thus, intervention did not remove an error, but rather worsened the risk by relocating it. 

Combining this with \(R_{\mathrm{obs}}=p\), we obtain
\begin{equation}
    \eta(\mathcal{C})
    :=
    R_{\mathrm{corr}}-R_{\mathrm{obs}}
    =
    (b^2-1)p
    =
    np
    >
    0.
    \label{eq:noisy-xor-harm-calculation}
\end{equation}
The intervention harm is also positive if we use the optimal predictor $\head^*$ directly (rather than its projection $\head$), in which case $$ \eta(\mathcal{C}) = R_{\mathrm{corr}}-R_{\mathrm{obs}}=\left(p^2+b^2p \right)-\left( p-p^2\right) = 2p^2+np>np>0.$$

Finally, let \(\varepsilon>0\) and \(M>0\). Since
\(\mathfrak{h}\to0\) as \(p\downarrow0\), we may first choose
\(p\in(0,1/2)\) sufficiently small that
\[
    \Delta_{\mathrm S}
    =
    \mathfrak{h}
    <
    \varepsilon.
\]
We may then choose \(n>M/p\) and set \(b=\sqrt{n+1}\). For these choices,
\[
    \Delta_{\mathrm L,1}=0,
    \qquad
    \eta(\mathcal{C}) \geq np>M.
\]
Thus arbitrarily accurate satisfaction of the sufficiency criterion, together
with exact localisation, places no finite upper bound on the harm caused by
ground-truth concept correction.
\end{proof}

\section{Experiment Details}
\label{app:experimental_details}

\subsection{Synthetic Experiments}
\label{app:synthetic-methods}

\subsubsection{Data-Generating Process}

We adapted the known--unknown concept construction introduced by \citet{parisini2026concept}. Each synthetic observation contained \(k_K=5\) supervised concepts and \(k^\prime\) additional unobserved concepts:
\[
    \boldsymbol{\chi}
    =
    (\ck ,\cu )
    \in\mathbb R^{5+k^\prime}.
\]
The complete concept vector was sampled as
\[
    \boldsymbol{\chi}\sim
    \mathcal N(\mathbf 0,\mathbf B).
\]

We evaluated three covariance conditions. In the \emph{diagonal} condition, \(\mathbf B=\mathbf I\). To construct the other two conditions, we generated a random symmetric positive-definite correlation matrix. The \emph{correlated} condition used this complete matrix. The \emph{block-independent} condition retained its within-known and within-unknown covariance blocks but set the cross-covariance blocks to zero. The covariance construction was matched across these conditions for each seed and value of \(k^\prime\).

Inputs were generated through a noisy linear map:
\[
    \mathbf x
    =
    \mathbf A\boldsymbol{\chi}
    +\boldsymbol{\epsilon}_x,
    \qquad
    \boldsymbol{\epsilon}_x
    \sim\mathcal N(\mathbf 0,\sigma_x^2\mathbf I),
\]
where \(\mathbf A\in\mathbb R^{64\times(5+k^\prime)}\) had independently sampled Gaussian entries and we set $\sigma_x=4$.

A Gaussian task-weight vector \(\boldsymbol{\psi}=(\boldsymbol{\psi}_K,\boldsymbol{\psi}_U)\) was sampled once for each synthetic world. For a specified incompleteness parameter \(\omega\), the known and unknown components were scaled as
\[
    \widetilde{\boldsymbol{\psi}}_K
    =(1-\omega)\boldsymbol{\psi}_K,
    \qquad
    \widetilde{\boldsymbol{\psi}}_U
    =\omega\boldsymbol{\psi}_U.
\]
Binary labels were then generated by
\[
    y
    =
    \mathbbm 1
    \left\{
        \widetilde{\boldsymbol{\psi}}_K^\top\ck 
        +
        \widetilde{\boldsymbol{\psi}}_U^\top\cu 
        +
        \epsilon_y>0
    \right\},
    \qquad
    \epsilon_y\sim\mathcal N(0,\sigma_y^2).
\]
We set $\sigma_y=4$. Thus, \(\omega\) controls the relative weighting of known and unknown task
directions.

Each synthetic world contained \(N=8{,}000\) observations. A seeded random
split assigned 75\% to development and 25\% to held-out testing. The
development observations were further divided into 80\% training and 20\%
validation subsets, giving \(4{,}800/1{,}200/2{,}000\) observations.

\subsubsection{Hybrid CBM and Interventions}

The hybrid CBM contained a linear concept encoder
\[
    \widehat{\ck}
    =g_\phi(\mathbf x)\in\mathbb R^5,
\]
a five-dimensional linear residual encoder
\[
    \mathbf z=h_\psi(\mathbf x)\in\mathbb R^5,
\]
and a linear binary task head
\[
    y=f_\theta(
        [\widehat{\ck},\mathbf z]).
\]
Continuous synthetic concepts were supervised using mean-squared error, and
binary task prediction used binary cross-entropy with logits.

For intervention set \(S\subseteq\{1,\ldots,5\}\), we constructed
\[
    \widehat c_j^{\,S}
    =
    \begin{cases}
        c_j, & j\in S,\\
        \widehat c_j, & j\notin S.
    \end{cases}
\]
The residual representation was held fixed. After training,
each intervention state \(S\) was evaluated on the same held-out test set. We defined its empirical test risk as the mean task loss
\[
    \mathcal{R}_S
    =
    \frac{1}{n_{\mathrm{test}}}
    \sum_i
    \ell\!\left(
        f_\theta(
            [\widehat{\ck}_{i}^{\,S},
             \mathbf z_i]),
        y_i
    \right),
\]
and intervention harm was
\[
    \eta(S)=\mathcal{R}_S-\mathcal{R}_\varnothing.
\]

\subsubsection{Training Objective}

Models were trained using
\[
    \mathcal L
    =
    \frac{
        \mathcal L_{\mathrm{task}}
        +\lambda_c\mathcal L_{\mathrm{concept}}
        +\lambda_{\mathrm{int}}\mathcal L_{\mathrm{int}}
        +\lambda_{\mathrm{mono}}\mathcal L_{\mathrm{mono}}
    }{
        1+\lambda_c+\lambda_{\mathrm{int}}+\lambda_{\mathrm{mono}}
    },
    \qquad \lambda_c=1.
\]
The endpoint loss \(\mathcal L_{\mathrm{int}}\) was the task loss after all
five concept predictions had been replaced by ground truth.

For an intervention edge
\(S\rightarrow S\cup\{j\}\), the monotonic penalty was
\[
    \left[
        \ell_{S\cup\{j\}}-\ell_S
    \right]_+.
\]
$m_{\mathrm{mono}}$ edges were sampled per minibatch. For each edge, intervention depth was
sampled uniformly from \(\{0,\ldots,4\}\), followed by a random permutation of the five concepts. The first concepts in this permutation defined \(S\), and the next concept defined \(j\). 

When stop-gradient was enabled, the predicted concept representation was detached only within \(\mathcal L_{\mathrm{mono}}\). The monotonic loss could therefore update the residual encoder and task head, but could not directly update the concept encoder. The ordinary concept and task losses continued to train the concept encoder.

Models were optimised using Adam with learning rate \(10^{-3}\), batch size 600, and at most 300 epochs. Training continued for at least 50 epochs and stopped after 30 validation epochs without an improvement of \(10^{-5}\).
The checkpoint with the lowest complete validation objective was retained.

\subsubsection{Experimental Design}

We crossed
\[
\begin{aligned}
    k^\prime&\in\{2,5,10,15\},\\
    \omega&\in\{0.10,0.25,0.50,0.75,0.90\},\\
    \lambda_{\mathrm{int}}&\in\{0,1,10,100,1000\},\\
    \lambda_{\mathrm{mono}}&\in\{0,10,100,1000\},
\end{aligned}
\]
with the three covariance conditions and seeds \(\{0,1,2,3,4\}\) and set $m_{\mathrm{mono}}=5$. Both stop-gradient settings were evaluated whenever \(\lambda_{\mathrm{mono}}>0\); one canonical setting was retained when \(\lambda_{\mathrm{mono}}=0\). The resulting factorial contained \(10{,}500\) models.

To explore the impact of varying $m_\mathrm{mono}$ we fixed \(\lambda_{\mathrm{int}}=10\), \(\lambda_{\mathrm{mono}}=100\), the block-independent covariance condition, and stop-gradient enabled. We varied
\[
    m_{\mathrm{mono}}\in\{1,2,5,10,20,40\}
\]
for \(k^\prime\in\{5,10\}\), all five values of \(\omega\), and five seeds,
yielding 300 additional models.

\subsubsection{Evaluation and Statistical Analysis}

All \(2^5=32\) intervention states and all \(5!=120\) correction orderings were evaluated. Endpoint harm was
\[
    \eta(\mathcal{C})
    =
    \eta(\{1,\ldots,5\}),
\]
and maximum partial harm was
\[
    H_{\mathrm{partial}}^{\max}
    =
    \max_{\varnothing\subset S\subset\{1,\ldots,5\}}\eta(S).
\]

For adjusted analyses, outcomes and continuous predictors were standardised. Regularisation weights were transformed as \(\log_{10}(1+\lambda)\). Additive ordinary least-squares models included \(k^\prime\), \(\omega\), \(\log_{10}(1+\lambda_{\mathrm{int}})\), \(\log_{10}(1+\lambda_{\mathrm{mono}})\), and covariance condition. Standard errors were clustered by synthetic world, defined by seed, \(k^\prime\), and covariance condition. The stop-gradient contrast was estimated only among models with \(\lambda_{\mathrm{mono}}>0\).

\subsubsection{Robustness and ablation analyses for synthetic experiments}
\label{app:synthetic-robustness}

We performed additional analyses to assess the robustness of the synthetic results and the effects of key modelling choices. Multivariable analyses confirmed the complementary effects of the two intervention objectives: increasing $\lambda_{\mathrm{int}}$ was primarily associated with reduced
endpoint harm, whereas increasing $\lambda_{\mathrm{mono}}$ was primarily associated with reduced maximum partial harm
(Fig.~\ref{fig:synthetic-adjusted}). Importantly, these reductions in harm were achieved with little change in predictive performance: across moderate regularisation weights ($\lambda_{\mathrm{int}}\in\{1,10,100\}$ and
$\lambda_{\mathrm{mono}}\in\{10,100\}$), task accuracy differed by at most 0.043 percentage points from the unregularised model. The effects on intervention harm were also robust across concept covariance structures (Fig.~\ref{fig:synthetic-robustness}B--D). Stopping gradients from $\mathcal{L}_{\mathrm{mono}}$ through the concept encoder mitigated a significant increase in concept prediction error (Fig.~\ref{fig:synthetic-robustness}A), while the benefit of sampling additional monotonic edges largely saturated at $m_{\mathrm{mono}}=5$
(Fig.~\ref{fig:synthetic-edges}).

\begin{figure}[h!]
    \centering
        \centering
        \includegraphics[width=0.7\linewidth]{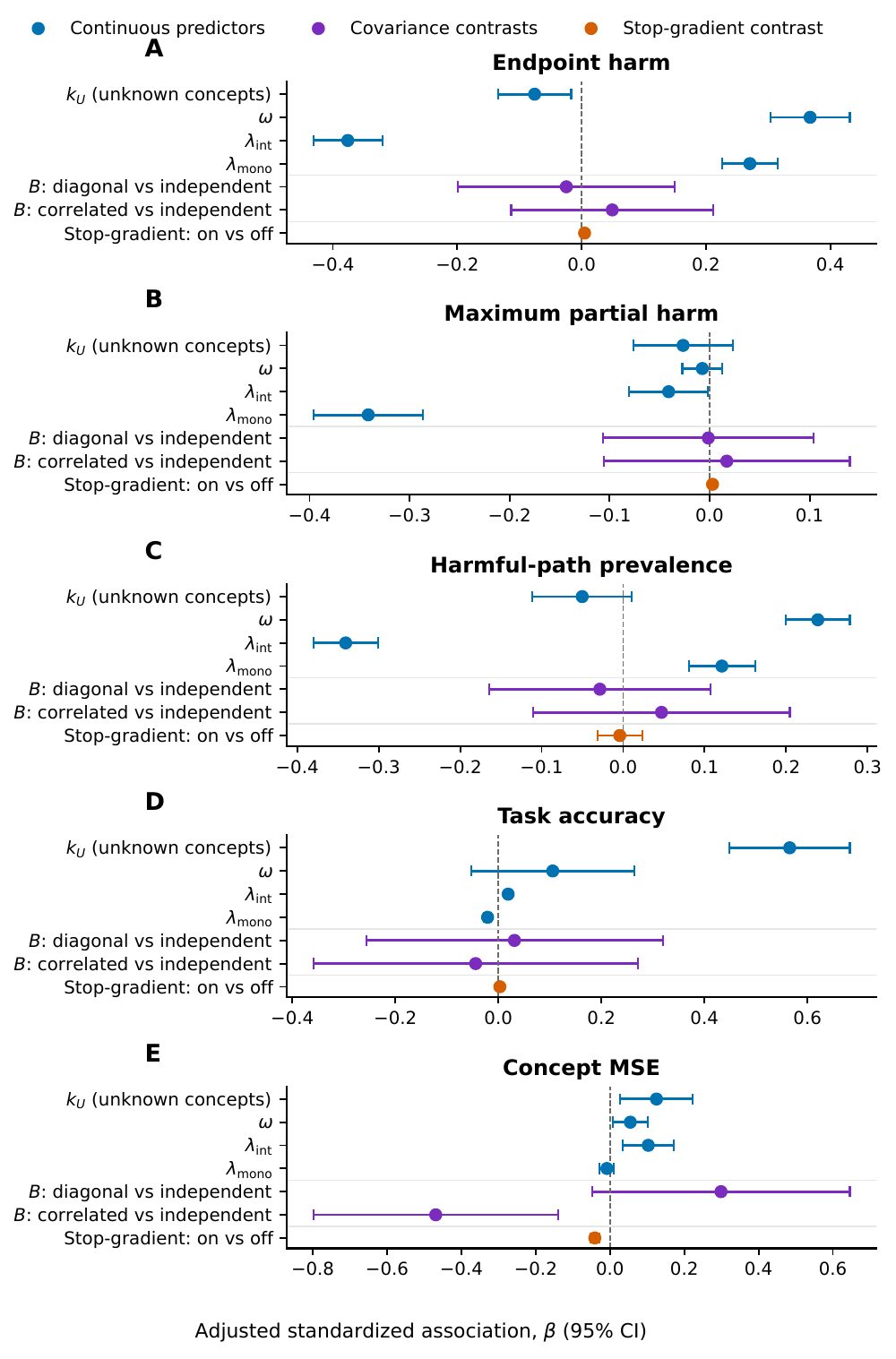}
        \caption{Adjusted determinants of intervention harm and model performance.
        Adjusted standardised associations with \textbf{(A)} endpoint harm; \textbf{(B)} maximum partial harm; \textbf{(C)} harmful-path prevalence; \textbf{(D)} task accuracy; and \textbf{(E)} concept mean-squared error. Points show regression coefficients and bars show 95\% cluster-robust confidence intervals across 60 matched synthetic worlds. Outcomes and continuous predictors were standardised. Covariance contrasts use the block-independent construction as the reference; the stop-gradient contrast was estimated among models with \(\lambda_{\mathrm{mono}}>0\).}
        \label{fig:synthetic-adjusted}
\end{figure}

\begin{figure}[h!]
    \centering
        \centering
        \includegraphics[width=0.70\linewidth]
            {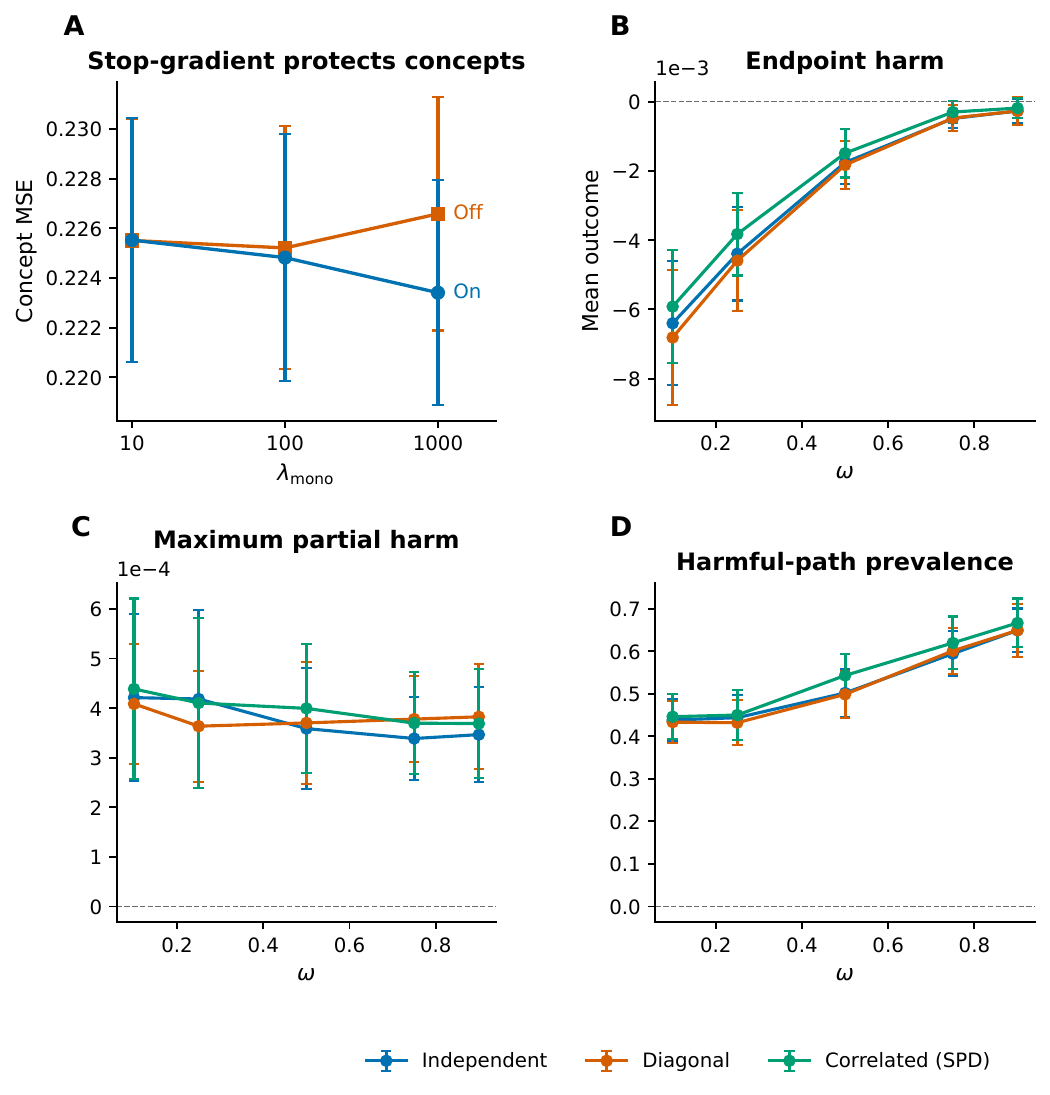}
        \caption{Robustness to stop-gradient and concept covariance structure.
        \textbf{(A)} Concept mean-squared error with and without stopping gradients from \(\mathcal{L}_{\mathrm{mono}}\) through the concept encoder. Values were first averaged within each of 60 matched synthetic worlds over \(\lambda_{\mathrm{int}}\) and \(\omega\); points and bars show means and 95\% confidence intervals across worlds. \textbf{(B--D)} Endpoint harm, maximum partial harm, and harmful-path prevalence as functions of \(\omega\) under block-independent, diagonal, and correlated positive-definite covariance structures. Stop-gradient-enabled models were averaged over the regularisation grid within each matched seed--\(k^\prime\) unit. Points and bars show means and 95\% confidence intervals across 20 units per covariance condition and value of \(\omega\).}
        \label{fig:synthetic-robustness}
\end{figure}

\begin{figure}[h!]
    \centering
        \centering
        \includegraphics[width=\linewidth]
            {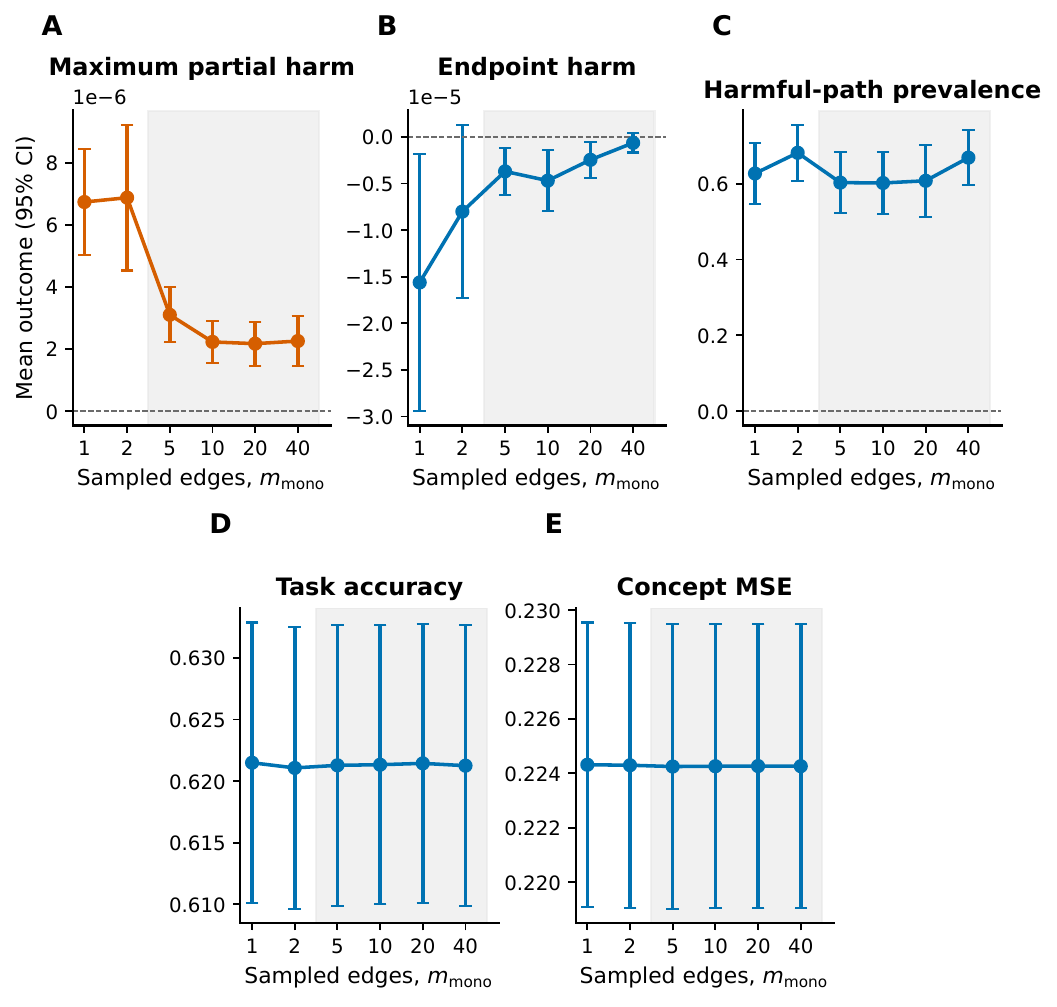}
        \caption{Effect of the number of monotonic edges sampled per minibatch.
        Mean \textbf{(A)} maximum partial harm; \textbf{(B)} endpoint harm; \textbf{(C)} harmful-path prevalence; \textbf{(D)} task accuracy; and \textbf{(E)} concept mean-squared error as the number of sampled monotonic edges \(m_{\mathrm{mono}}\) was varied. The ablation fixed \(\lambda_{\mathrm{int}}=10\), \(\lambda_{\mathrm{mono}}=100\), block-independent concept covariance, and stop-gradient. Points and bars show means and 95\% confidence intervals across 50 models spanning two values of \(k^\prime\), five values of \(\omega\), and five seeds. Shading denotes \(m_{\mathrm{mono}}\geq5\), beyond which the reduction in maximum partial harm largely plateaued.}
        \label{fig:synthetic-edges}
\end{figure}

\subsection{Real-world Data Experiments}
\label{app:cub-methods}

\subsubsection{Representations, Concepts, and Data Partition}

\texttt{CUB} contains \(11{,}788\) images from 200 bird species and 312 binary image-level attributes. We encoded each image using the pretrained CLIP ViT-B/16 image encoder and L2-normalised the resulting 512-dimensional embedding. CLIP parameters were fixed throughout.

We used one class-stratified split for the complete experiment: \(7{,}488\) images for training, \(1{,}926\) for validation, and \(2{,}374\) for held-out testing. Embeddings were standardised using feature-wise training-set means and standard deviations, with the same transformation subsequently applied to the validation and test data. Test images were not used for concept-set selection, training, or early stopping.

\subsubsection{Pre-model Concept Incompleteness}

To vary concept incompleteness while holding concept-set size fixed, we generated 210 candidate sets containing 50 attributes each. We first fitted a linear 200-class probe using all 312 attributes in the training partition and ranked attributes by the L2 norm of their class coefficients. This ranking was used only to diversify candidate generation.

We sampled 30 unique concept sets at 7 importance biases
\[
    b\in\{-6,-3,-1,0,1,3,6\}.
\]
Let \(r_j\in[0,1]\) denote the normalised importance rank of attribute \(j\), with zero denoting the most highly ranked attribute, we defined the attribute sampling weight as
\[
    p_j\propto\exp(-b r_j).
\]
Positive biases therefore favoured more task-important attributes, negative biases favoured less important attributes, and \(b=0\) produced uniform sampling.

For each candidate concept set \(\mathcal C\), we fitted a separate linear 200-class classifier using its 50 ground-truth concepts and defined pre-model incompleteness as
\[
    \Omega_{\mathcal C}^{\mathrm{pre}}
    =
    \frac{
        \widehat H(Y\mid C)
    }{
        \widehat H(Y)
    }.
\]
The numerator was the classifier's best validation cross-entropy and the denominator was the validation cross-entropy of a null classifier using fitting-set class frequencies. This is a normalised probe-based measure rather than a literal information-theoretic fraction; higher values indicate that the selected concepts leave more task-relevant information unexplained.

Candidates were divided into ten quantile strata according to \(\Omega_{\mathcal C}^{\mathrm{pre}}\), and eight sets were sampled without replacement from each stratum. The resulting 80 sets spanned \(\Omega_{\mathcal C}^{\mathrm{pre}}=0.597\)--\(0.863\), were fixed before hybrid-model training, and were shared across all three training regimes (Fig.~\ref{fig:cub-concept-selection}).

\begin{figure}[h!]
    \centering
        \centering
        \includegraphics[width=\linewidth]
            {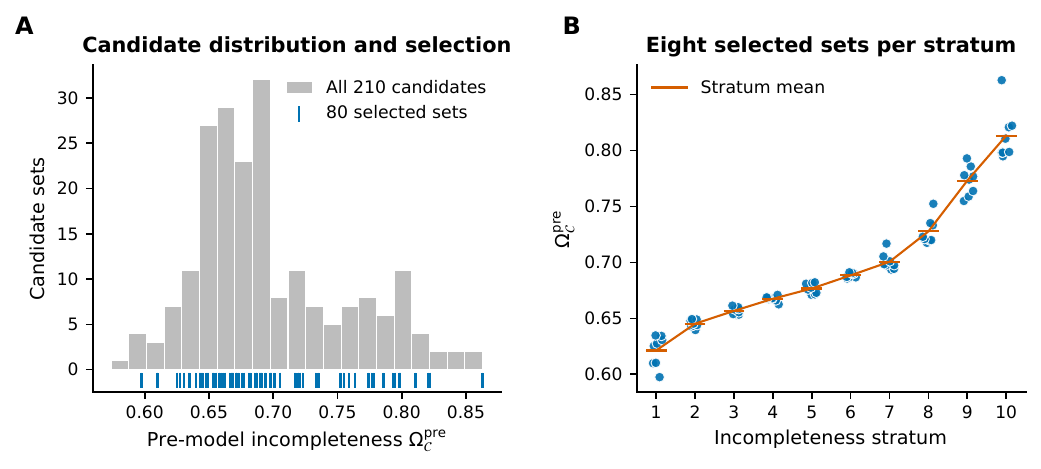}
        \caption{Construction of fixed-size concept sets spanning pre-model incompleteness. \textbf{(A)} Distribution of \(\Omega_{\mathcal C}^{\mathrm{pre}}\) across 210 candidate sets containing 50 concepts each. Blue marks identify the 80 sets selected for model training. \textbf{(B)} Pre-model incompleteness of the eight selected sets within each of ten quantile strata; orange lines show stratum means. Higher \(\Omega_{\mathcal C}^{\mathrm{pre}}\) indicates that more task-relevant information remained unexplained by the selected concepts.}
        \label{fig:cub-concept-selection}
\end{figure}

\subsubsection{Hybrid CBM and Matched Training}

For each concept set, the hybrid CBM contained a linear concept encoder, a 32-dimensional linear residual encoder, and a linear 200-class task head:
\[
    \widehat{\mathbf c}
    =
    \sigma(g_\phi(\mathbf x))\in[0,1]^{50},
    \qquad
    \mathbf z=h_\psi(\mathbf x)\in\mathbb R^{32},
\]
\[
    \mathbf y
    =
    f_\theta([\widehat{\mathbf c},\mathbf z]).
\]
Concept supervision used binary cross-entropy and observational task training used multiclass cross-entropy, with \(\lambda_c=1\).

For every concept set, we trained:
\[
\begin{array}{lcc}
    \toprule
    \text{Training regime}
        &\lambda_{\mathrm{int}}
        &\lambda_{\mathrm{mono}}\\
    \midrule
    \text{baseline} &0&0\\
    \text{endpoint} &1&0\\
    \text{endpoint + monotonic, stop-gradient} &1&100\\
    \bottomrule
\end{array}
\]
giving \(80\times3=240\) models. The three models for a concept set shared their initial parameters.

The normalised training objective was
Models were trained using
\[
    \mathcal L
    =
    \frac{
        \mathcal L_{\mathrm{task}}
        +\lambda_c\mathcal L_{\mathrm{concept}}
        +\lambda_{\mathrm{int}}\mathcal L_{\mathrm{int}}
        +\lambda_{\mathrm{mono}}\mathcal L_{\mathrm{mono}}
    }{
        1+\lambda_c+\lambda_{\mathrm{int}}+\lambda_{\mathrm{mono}}
    },
    \qquad \lambda_c=1.
\]
The endpoint loss was calculated after replacing all predicted concepts with their ground-truth values.

For monotonic training, we set $m_{\mathrm{mono}}=5$. Intervention depth was sampled uniformly from \(\{0,\ldots,49\}\), and a random concept permutation defined the current set \(S\) and next corrected concept \textit{i}. The monotonic penalty was
\[
    \left[
        \ell_{S\cup\{j\}}-\ell_S
    \right]_+.
\]
Gradients from this term were stopped through predicted concepts but retained through the residual encoder and task head.

Models were optimised using AdamW with learning rate \(10^{-3}\), weight decay \(10^{-5}\), and batch size 512. Training continued for at most 300 epochs, with a minimum of 50 epochs and patience of 30 validation epochs at an improvement threshold of \(10^{-5}\). The parameters minimising the full
validation objective were retained.

\subsubsection{Intervention Evaluation}

For an intervention set \(S\), selected predicted concepts were replaced by their image-level ground-truth values while the residual representation was held fixed. All three matched models for a concept set were evaluated using the same 512 random concept-correction orderings.

At state \(S\), test risk was
\[
    \mathcal{R}_S
    =
    -\frac{1}{n_{\mathrm{test}}}
    \sum_i
    \log p_\theta(
        y_i\mid\widehat{\mathbf c}^{\,S}_i,\mathbf z_i),
\]
and harm was \(\eta(S)=\mathcal{R}_S-\mathcal{R}_\varnothing\). We measured endpoint harm after all 50 concepts had been corrected, the maximum harm observed among the intermediate states of the 512 sampled paths, and the fraction of sampled paths containing at least one intermediate state with harm above \(10^{-8}\). The reported \texttt{CUB} maximum is therefore a maximum over sampled paths rather than all \(2^{50}\) possible intervention states.

We additionally recorded observational top-1 accuracy, fully intervened accuracy, and concept MSE. 

\subsubsection{Statistical Analysis}

For each outcome, the outcome and \(\Omega_{\mathcal C}^{\mathrm{pre}}\) were standardised across the complete 240-model design. We fitted
\[
    Y_z
    =
    \beta_0
    +\beta_\Omega
        (\Omega_{\mathcal C}^{\mathrm{pre}})
    +\beta_1\mathbbm 1(\mathrm{endpoint})
    +\beta_2\mathbbm 1(\mathrm{endpoint+monotonic})
    +\epsilon,
\]
with the baseline regime as reference. Standard errors were clustered by concept-set identifier, preserving the matched three-model design. This additive model estimates the average association with incompleteness and each intervention-training regime's adjusted difference from baseline. The results are shown in Fig. \ref{fig:cub-performance}.

\begin{figure}[h!]
    \centering
        \centering
        \includegraphics[width=\linewidth]
            {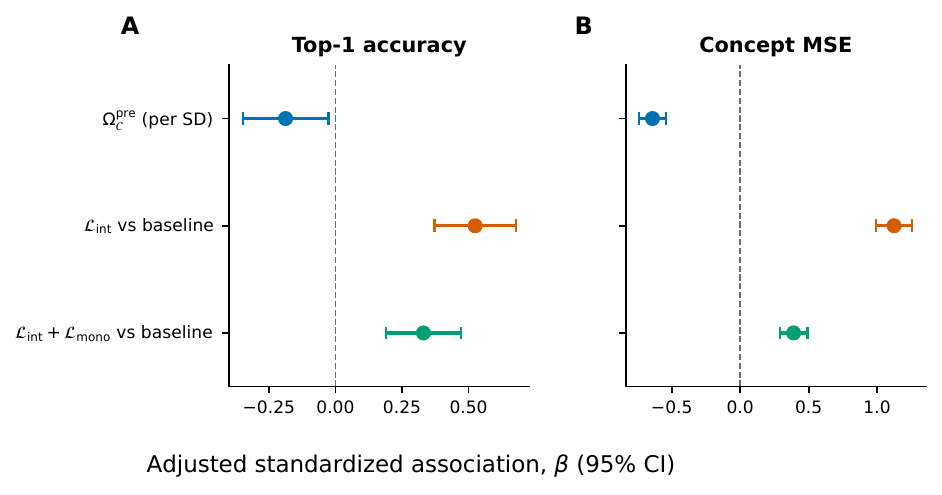}
        \caption{Adjusted associations with model performance and concept quality on \texttt{CUB}. Adjusted standardised associations with \textbf{(A)} top-1 accuracy and \textbf{(B)} concept mean-squared error. Each panel shows the association with a one-standard-deviation increase in pre-model incompleteness, \(\Omega_{\mathcal C}^{\mathrm{pre}}\), and the adjusted differences between endpoint training or endpoint-plus-monotonic training and the baseline regime. Points show regression coefficients and bars show 95\% cluster-robust confidence intervals; dashed vertical lines denote no association. Outcomes and \(\Omega_{\mathcal C}^{\mathrm{pre}}\) were standardised. Models included all 240 runs across 80 matched concept sets, with baseline as the regime reference and standard errors clustered by concept-set identifier.}
        \label{fig:cub-performance}
\end{figure}

\end{document}

%% file: math_commands.tex
\usepackage{amsmath,amsfonts,bm}

\def\eqref#1{equation~\ref{#1}}

\def\1{\bm{1}}

\DeclareMathAlphabet{\mathsfit}{\encodingdefault}{\sfdefault}{m}{sl}
\SetMathAlphabet{\mathsfit}{bold}{\encodingdefault}{\sfdefault}{bx}{n}

\newcommand{\E}{\mathbb{E}}



%% file: iclr2026_conference.bib
@inproceedings{koh2020concept,
  title={Concept bottleneck models},
  author={Koh, Pang Wei and Nguyen, Thao and Tang, Yew Siang and Mussmann, Stephen and Pierson, Emma and Kim, Been and Liang, Percy},
  booktitle={International conference on machine learning},
  pages={5338--5348},
  year={2020},
  organization={Pmlr}
}

@article{marconato2022glancenets,
  title={Glancenets: Interpretable, leak-proof concept-based models},
  author={Marconato, Emanuele and Passerini, Andrea and Teso, Stefano},
  journal={Advances in neural information processing systems},
  volume={35},
  pages={21212--21227},
  year={2022}
}

@article{steinmann2023learning,
  title={Learning to intervene on concept bottlenecks},
  author={Steinmann, David and Stammer, Wolfgang and Friedrich, Felix and Kersting, Kristian},
  journal={arXiv preprint arXiv:2308.13453},
  year={2023}
}

@article{pugnana2026deferring,
  title={Deferring concept bottleneck models: Learning to defer interventions to inaccurate experts},
  author={Pugnana, Andrea and Massidda, Riccardo and Giannini, Francesco and Barbiero, Pietro and Espinosa Zarlenga, Mateo and Pellungrini, Roberto and Dominici, Gabriele and Giannotti, Fosca and Bacciu, Davide},
  journal={Advances in Neural Information Processing Systems},
  volume={38},
  pages={67989--68044},
  year={2026}
}

@inproceedings{singhi2024improving,
  title={Improving intervention efficacy via concept realignment in concept bottleneck models},
  author={Singhi, Nishad and Kim, Jae Myung and Roth, Karsten and Akata, Zeynep},
  booktitle={European Conference on Computer Vision},
  pages={422--438},
  year={2024},
  organization={Springer}
}

@inproceedings{zarlenga2023towards,
  title={Towards robust metrics for concept representation evaluation},
  author={Espinosa Zarlenga, Mateo and Barbiero, Pietro and Shams, Zohreh and Kazhdan, Dmitry and Bhatt, Umang and Weller, Adrian and Jamnik, Mateja},
  booktitle={Proceedings of the AAAI Conference on Artificial Intelligence},
  volume={37},
  number={10},
  pages={11791--11799},
  year={2023}
}

@article{knab2026s,
  title={What’s in the bottle? a survey and roadmap of concept bottleneck models},
  author={Knab, Patrick and Steinmann, David and Bartelt, Christian and Kersting, Kristian and Schiele, Bernt and Seidl, Thomas and Schlegel, Udo and Stammer, Wolfgang},
  journal={Transactions on Machine Learning Research},
  year={2026}
}

@inproceedings{yang2023labo,
  title={Language in a bottle: Language model guided concept bottlenecks for interpretable image classification},
  author={Yang, Yue and Panagopoulou, Artemis and Zhou, Shenghao and Jin, Daniel and Callison-Burch, Chris and Yatskar, Mark},
  booktitle={2023 IEEE/CVF Conference on Computer Vision and Pattern Recognition (CVPR)},
  pages={19187--19197},
  year={2023},
  organization={IEEE}
}

@article{ragkousis2024tree,
  title={Tree-based leakage inspection and control in concept bottleneck models},
  author={Ragkousis, Angelos and Parbhoo, Sonali},
  journal={arXiv preprint arXiv:2410.06352},
  year={2024}
}

@article{zarlenga2025avoiding,
  title={Avoiding leakage poisoning: Concept interventions under distribution shifts},
  author={Espinosa Zarlenga, Mateo and Dominici, Gabriele and Barbiero, Pietro and Shams, Zohreh and Jamnik, Mateja},
  journal={arXiv preprint arXiv:2504.17921},
  year={2025}
}

@article{geirhos2020shortcut,
  title={Shortcut learning in deep neural networks},
  author={Geirhos, Robert and Jacobsen, J{\"o}rn-Henrik and Michaelis, Claudio and Zemel, Richard and Brendel, Wieland and Bethge, Matthias and Wichmann, Felix A},
  journal={Nature Machine Intelligence},
  volume={2},
  number={11},
  pages={665--673},
  year={2020},
  publisher={Nature Publishing Group UK London}
}

@article{marconato2023not,
  title={Not all neuro-symbolic concepts are created equal: Analysis and mitigation of reasoning shortcuts},
  author={Marconato, Emanuele and Teso, Stefano and Vergari, Antonio and Passerini, Andrea},
  journal={Advances in Neural Information Processing Systems},
  volume={36},
  pages={72507--72539},
  year={2023}
}

@inproceedings{rao2024discover,
  title={Discover-then-name: Task-agnostic concept bottlenecks via automated concept discovery},
  author={Rao, Sukrut and Mahajan, Sweta and B{\"o}hle, Moritz and Schiele, Bernt},
  booktitle={European Conference on Computer Vision},
  pages={444--461},
  year={2024},
  organization={Springer}
}

@article{skirzynski2026measuring,
  title={Measuring What Matters: Synthetic Benchmarks for Concept Bottleneck Models},
  author={Skirzynski, Julian and Cheon, Harry and Kadekodi, Shreyas and Stewart, Meredith and Ustun, Berk},
  journal={arXiv preprint arXiv:2606.04326},
  year={2026}
}

@inproceedings{ramaswamy2023overlooked,
  title={Overlooked factors in concept-based explanations: Dataset choice, concept learnability, and human capability},
  author={Ramaswamy, Vikram V and Kim, Sunnie SY and Fong, Ruth and Russakovsky, Olga},
  booktitle={2023 IEEE/CVF Conference on Computer Vision and Pattern Recognition (CVPR)},
  pages={10932--10941},
  year={2023},
  organization={IEEE}
}

@inproceedings{de2026mechanistic_cbm,
  title={Learning concept bottleneck models from mechanistic explanations},
  author={De Santis, Antonio and Tong, Schrasing and Brambilla, Marco and Kagal, Lalana},
  booktitle={The Fourteenth International Conference on Learning Representations},
  year={2026}
}

@article{vandenhirtz2024stochastic,
  title={Stochastic concept bottleneck models},
  author={Vandenhirtz, Moritz and Laguna, Sonia and Marcinkevi{\v{c}}s, Ri{\v{c}}ards and Vogt, Julia E},
  journal={Advances in Neural Information Processing Systems},
  volume={37},
  pages={51787--51810},
  year={2024}
}

@inproceedings{xu2024energy,
  title={Energy-based concept bottleneck models: Unifying prediction, concept intervention, and probabilistic interpretations},
  author={Xu, Xinyue and Qin, Yi and Mi, Lu and Wang, Hao and Li, Xiaomeng},
  booktitle={International Conference on Learning Representations},
  volume={2024},
  pages={6944--6966},
  year={2024}
}

@inproceedings{almudevar2026there,
  title={There was never a bottleneck in concept bottleneck models},
  author={Almud{\'e}var, Antonio and Hern{\'a}ndez Lobato, Jos{\'e} Miguel and Ortega, Alfonso},
  booktitle={International Conference on Learning Representations},
  volume={2026},
  pages={114146--114165},
  year={2026}
}

@article{yuksekgonul2022post,
  title={Post-hoc concept bottleneck models},
  author={Yuksekgonul, Mert and Wang, Maggie and Zou, James},
  journal={arXiv preprint arXiv:2205.15480},
  year={2022}
}

@article{wah2011caltech, title={The caltech-ucsd birds-200-2011 dataset}, author={Wah, Catherine and Branson, Steve and Welinder, Peter and Perona, Pietro and Belongie, Serge}, year={2011}, publisher={California Institute of Technology} }

@article{espinosa2023learning,
  title={Learning to receive help: Intervention-aware concept embedding models},
  author={Espinosa Zarlenga, Mateo and Collins, Katie and Dvijotham, Krishnamurthy and Weller, Adrian and Shams, Zohreh and Jamnik, Mateja},
  journal={Advances in Neural Information Processing Systems},
  volume={36},
  pages={37849--37875},
  year={2023}
}

@article{mahinpei2021promises,
  title={Promises and pitfalls of black-box concept learning models},
  author={Mahinpei, Anita and Clark, Justin and Lage, Isaac and Doshi-Velez, Finale and Pan, Weiwei},
  journal={arXiv preprint arXiv:2106.13314},
  year={2021}
}

@article{havasi2022addressing,
  title={Addressing leakage in concept bottleneck models},
  author={Havasi, Marton and Parbhoo, Sonali and Doshi-Velez, Finale},
  journal={Advances in Neural Information Processing Systems},
  volume={35},
  pages={23386--23397},
  year={2022}
}

@article{espinosa2022concept,
  title={Concept embedding models: Beyond the accuracy-explainability trade-off},
  author={Espinosa Zarlenga, Mateo and Barbiero, Pietro and Ciravegna, Gabriele and Marra, Giuseppe and Giannini, Francesco and Diligenti, Michelangelo and Shams, Zohreh and Precioso, Frederic and Melacci, Stefano and Weller, Adrian and others},
  journal={Advances in neural information processing systems},
  volume={35},
  pages={21400--21413},
  year={2022}
}

@article{parisini2025leakage,
  title={Leakage and interpretability in concept-based models},
  author={Parisini, Enrico and Chakraborti, Tapabrata and Harbron, Chris and MacArthur, Ben D and Banerji, Christopher RS},
  journal={arXiv preprint arXiv:2504.14094},
  year={2025}
}

@article{margeloiu2021concept,
  title={Do concept bottleneck models learn as intended?},
  author={Margeloiu, Andrei and Ashman, Matthew and Bhatt, Umang and Chen, Yanzhi and Jamnik, Mateja and Weller, Adrian},
  journal={arXiv preprint arXiv:2105.04289},
  year={2021}
}

@inproceedings{shin2023closer,
  title={A closer look at the intervention procedure of concept bottleneck models},
  author={Shin, Sungbin and Jo, Yohan and Ahn, Sungsoo and Lee, Namhoon},
  booktitle={International conference on machine learning},
  pages={31504--31520},
  year={2023},
  organization={PMLR}
}

@article{parisini2026concept,
  title={Concept frustration: Aligning human concepts and machine representations},
  author={Parisini, Enrico and Soelistyo, Christopher J and Isaac, Ahab and Barp, Alessandro and Banerji, Christopher RS},
  journal={arXiv preprint arXiv:2603.29654},
  year={2026}
}

@misc{chauhan_interactive_2023,
	title = {Interactive {Concept} {Bottleneck} {Models}},
	url = {http://arxiv.org/abs/2212.07430},
	doi = {10.48550/arXiv.2212.07430},
	urldate = {2026-09-18},
	publisher = {arXiv},
	author = {Chauhan, Kushal and Tiwari, Rishabh and Freyberg, Jan and Shenoy, Pradeep and Dvijotham, Krishnamurthy},
	month = apr,
	year = {2023},
	note = {arXiv:2212.07430 [cs.LG]},
}

@misc{alvarez-melis_towards_2018,
	title = {Towards {Robust} {Interpretability} with {Self}-{Explaining} {Neural} {Networks}},
	url = {http://arxiv.org/abs/1806.07538},
	doi = {10.48550/arXiv.1806.07538},
	urldate = {2026-09-18},
	publisher = {arXiv},
	author = {Alvarez-Melis, David and Jaakkola, Tommi S.},
	month = dec,
	year = {2018},
	note = {arXiv:1806.07538 [cs.LG]},
}

@article{chen_concept_2020,
	title = {Concept {Whitening} for {Interpretable} {Image} {Recognition}},
	volume = {2},
	issn = {2522-5839},
	url = {http://arxiv.org/abs/2002.01650},
	doi = {10.1038/s42256-020-00265-z},
	number = {12},
	urldate = {2026-09-18},
	journal = {Nature Machine Intelligence},
	author = {Chen, Zhi and Bei, Yijie and Rudin, Cynthia},
	month = dec,
	year = {2020},
	note = {arXiv:2002.01650 [cs.LG]},
	pages = {772--782},
}

@misc{sawada_concept_2022,
	title = {Concept {Bottleneck} {Model} with {Additional} {Unsupervised} {Concepts}},
	url = {http://arxiv.org/abs/2202.01459},
	doi = {10.48550/arXiv.2202.01459},
	urldate = {2026-09-18},
	publisher = {arXiv},
	author = {Sawada, Yoshihide and Nakamura, Keigo},
	month = feb,
	year = {2022},
	note = {arXiv:2202.01459 [cs.CV]},
}

@misc{oikarinen_label-free_2023,
	title = {Label-{Free} {Concept} {Bottleneck} {Models}},
	url = {http://arxiv.org/abs/2304.06129},
	doi = {10.48550/arXiv.2304.06129},
	urldate = {2026-09-18},
	publisher = {arXiv},
	author = {Oikarinen, Tuomas and Das, Subhro and Nguyen, Lam M. and Weng, Tsui-Wei},
	month = jun,
	year = {2023},
	note = {arXiv:2304.06129 [cs.LG]},
}

@misc{kim_probabilistic_2023,
	title = {Probabilistic {Concept} {Bottleneck} {Models}},
	url = {http://arxiv.org/abs/2306.01574},
	doi = {10.48550/arXiv.2306.01574},
	publisher = {arXiv},
	author = {Kim, Eunji and Jung, Dahuin and Park, Sangha and Kim, Siwon and Yoon, Sungroh},
	month = jun,
	year = {2023},
	note = {arXiv:2306.01574 [cs.LG]},
}

@misc{zarlenga_defense_2026,
	title = {In {Defense} of {Information} {Leakage} in {Concept}-based {Models}},
	url = {http://arxiv.org/abs/2606.10669},
	doi = {10.48550/arXiv.2606.10669},
	urldate = {2026-09-18},
	publisher = {arXiv},
	author = {Espinosa Zarlenga, Mateo},
	month = jun,
	year = {2026},
	note = {arXiv:2606.10669 [cs.LG]},
}

@misc{enouen_debugging_2026,
	title = {Debugging {Concept} {Bottleneck} {Models} through {Removal} and {Retraining}},
	url = {http://arxiv.org/abs/2509.21385},
	doi = {10.48550/arXiv.2509.21385},
	urldate = {2026-09-18},
	publisher = {arXiv},
	author = {Enouen, Eric and Galhotra, Sainyam},
	month = mar,
	year = {2026},
	note = {arXiv:2509.21385 [cs.CV]},
}

@misc{makonnen_measuring_2025,
	title = {Measuring {Leakage} in {Concept}-{Based} {Methods}: {An} {Information} {Theoretic} {Approach}},
	shorttitle = {Measuring {Leakage} in {Concept}-{Based} {Methods}},
	url = {http://arxiv.org/abs/2504.09459},
	doi = {10.48550/arXiv.2504.09459},
	urldate = {2026-09-18},
	publisher = {arXiv},
	author = {Makonnen, Mikael and Vandenhirtz, Moritz and Laguna, Sonia and Vogt, Julia E.},
	month = jul,
	year = {2025},
	note = {arXiv:2504.09459 [cs.LG]},
}
